\newtheorem{theorem}{Theorem}
\newtheorem{lemma}{Lemma}
\newtheorem{prop}{Proposition}
\newtheorem{problem}{Problem}
\title{Safety Monitoring for Learning-Enabled Cyber-Physical Systems in Out-of-Distribution Scenarios}
\author{
    Vivian Lin \\
    University of Pennsylvania \\
    Philadelphia, Pennsylvania, USA \\
    \texttt{vilin@seas.upenn.edu} \\
\And
    Ramneet Kaur \\
    SRI \\
    Menlo Park, California, USA \\
    \texttt{ramneet.kaur@sri.com} \\
\And
    Yahan Yang \\
    University of Pennsylvania \\
    Philadelphia, Pennsylvania, USA \\
    \texttt{yangy96@seas.upenn.edu} \\
\And
    Souradeep Dutta \\
    University of British Columbia \\
    Vancouver, Canada \\
    \texttt{souradeep@ece.ubc.ca} \\
\And
    Yiannis Kantaros \\
    Washington University in St. Louis \\
    St. Louis, Missouri, USA \\
    \texttt{ioannisk@wustl.edu} \\
\And
    Anirban Roy \\
    SRI \\
    Menlo Park, California, USA \\
    \texttt{anirban.roy@sri.com} \\
\And
    Susmit Jha \\
    SRI \\
    Menlo Park, California, USA \\
    \texttt{susmit.jha@sri.com} \\
\And
    Oleg Sokolsky \\
    University of Pennsylvania \\
    Philadelphia, Pennsylvania, USA \\
    \texttt{sokolsky@cis.upenn.edu} \\
\And
    Insup Lee \\
    University of Pennsylvania \\
    Philadelphia, Pennsylvania, USA \\
    \texttt{lee@cis.upenn.edu}
}
\begin{document}

\maketitle

\begin{abstract}
The safety of learning-enabled cyber-physical systems is compromised by the well-known vulnerabilities of deep neural networks to out-of-distribution (OOD) inputs. Existing literature has sought to monitor the safety of such systems by detecting OOD data. However, such approaches have limited utility, as the presence of an OOD input does not necessarily imply the violation of a desired safety property. We instead propose to directly monitor safety in a manner that is itself robust to OOD data. To this end, we predict violations of signal temporal logic safety specifications based on predicted future trajectories. Our safety monitor additionally uses a novel combination of adaptive conformal prediction and incremental learning. The former obtains probabilistic prediction guarantees even on OOD data, and the latter prevents overly conservative predictions. We evaluate the efficacy of the proposed approach in two case studies on safety monitoring: 1) predicting collisions of an F1Tenth car with static obstacles, and 2) predicting collisions of a race car with multiple dynamic obstacles. We find that adaptive conformal prediction obtains theoretical guarantees where other uncertainty quantification methods fail to do so. Additionally, combining adaptive conformal prediction and incremental learning for safety monitoring achieves high recall and timeliness while reducing loss in precision. We achieve these results even in OOD settings and outperform alternative methods.
\end{abstract}

\section{Introduction}
\label{sec:intro} 

With the human-like performance of deep learning across different domains~\citep{galactica, atari}, there has been explosive interest in using such techniques for learning-enabled cyber-physical systems (LE-CPS). For instance, deep learning models have been deployed in autonomous vehicles for wide public use, such as in the Tesla Full Self-Driving system~\citep{tesla} and Waymo driverless taxis~\citep{waymo}. Despite their wide adoption, the critical vulnerability of deep learning models to out-of-distribution (OOD) inputs has yet to be fully understood or even corrected. That is, deep learning models have been shown to make mistakes on inputs that lay far from their training distribution, even those that are realistic and highly likely during deployment~\citep{hendrycks2016baseline,hendrycks2019benchmarking}.

One way to provide safety assurance for these systems is to detect scenarios in which OOD inputs are occurring, allowing the LE-CPS a chance to employ risk mitigation strategies (e.g., abstaining from making a prediction)~\citep{cai2020real,codit,ramakrishna2022efficient,interpretable_ood}. Although popular, such an approach has limited utility, as it assumes that out-of-distribution inputs are always destructive. However, contrary to this assumption, a) learning-enabled components might generalize to novel inputs to some extent~\citep{semantic_ood_detection}, and b) the overall system may be robust with respect to safety specifications despite component-level errors~\citep{falsification}. In Section~\ref{sec:cartpole}, we demonstrate this point empirically through an exploration of the cartpole control benchmark. When environmental changes influence the system, even a single-layer deep neural-network controller can generalize to the resulting OOD inputs and avoid failure.
Hence, OOD detection alone is not a complete solution to the problem of safety assurance.

In this paper, we alternatively propose to directly monitor the safety of an LE-CPS through methods that can be employed even in OOD scenarios. Making no assumptions on the input distribution, we predict the violation of a safety property over a finite horizon and raise an alarm accordingly. This safety property can be expressed as a signal temporal logic (STL) formula on the system's states~\citep{lars_iccps, falsification}, with degree of satisfaction captured by the STL robustness value. To monitor safety, we calculate the robustness value for a state trajectory predicted by a deep learning model. This approach is similar in spirit to some predictive runtime verification techniques~\citep{lars_iccps,zhao2024robust}, but we extend upon existing work to improve performance and obtain probabilistic guarantees even in OOD scenarios. Specifically, we present a technique that combines adaptive conformal prediction with incremental learning.

Adaptive conformal prediction (ACP) is an uncertainty quantification method that obtains probabilistic guarantees without any assumptions on the distribution of the predictor's inputs~\citep{acp}. This is in contrast to conformal prediction~\citep{saunders1999transduction,vovk1999machine}, which assumes exchangeability and hence no shift in distribution, and the robust conformal prediction that is used in~\citet{zhao2024robust}, which holds under a bounded amount of distribution shift. Using ACP alone would lead to more conservative safety violation predictions especially on OOD data, reducing precision for an increase in recall. In contrast to prior work that uses only uncertainty quantification for safety monitoring~\citep{zhao2024robust,lars_iccps}, we propose to use ACP with incremental learning to recover this precision.

We validate the proposed approach through two case studies of safety-critical systems, predicting collisions of an F1Tenth car with static obstacles and collisions of a race car with multiple dynamic obstacles. We compare our method to~\citet{zhao2024robust} and explore the empirical effects of incremental learning and uncertainty quantification on our safety monitor. 

The contributions of this paper can be summarized as follows.
\begin{enumerate}
\item We propose to monitor the safety of LE-CPS in OOD scenarios, instead of detecting 
    and abstaining on OOD inputs. Our method predicts safety based on a system's future STL robustness value.
\item We leverage the adaptive conformal prediction framework to obtain probabilistic guarantees on this prediction without any assumptions on the inference time data distribution.\
\item We employ incremental learning to balance the extra conservatism induced by adaptive conformal prediction.
\item We show empirically that, by combining adaptive conformal prediction with incremental learning, our proposed safety monitor predicts safety violations in a timely manner with competitive recall while balancing precision, even in OOD scenarios. Our monitor additionally makes predictions with probabilistic guarantees when alternative methods cannot. 
\end{enumerate}
\section{A Motivating Example: Decoupling OOD Detection and Safety Monitoring} \label{sec:cartpole}
Although adopted widely the literature, using OOD detection to guard against safety violations unfairly assumes that OOD inputs to the learning-enabled CPS component always lead to system-level failures. In this section, we argue instead that OOD detection and safety monitoring are two distinct goals. We consider the cartpole (inverted pendulum) benchmark to motivate this decoupling.

\begin{figure*}[!t]
    \begin{subfigure}[h]{0.25\linewidth}
        \includegraphics[width=\linewidth]{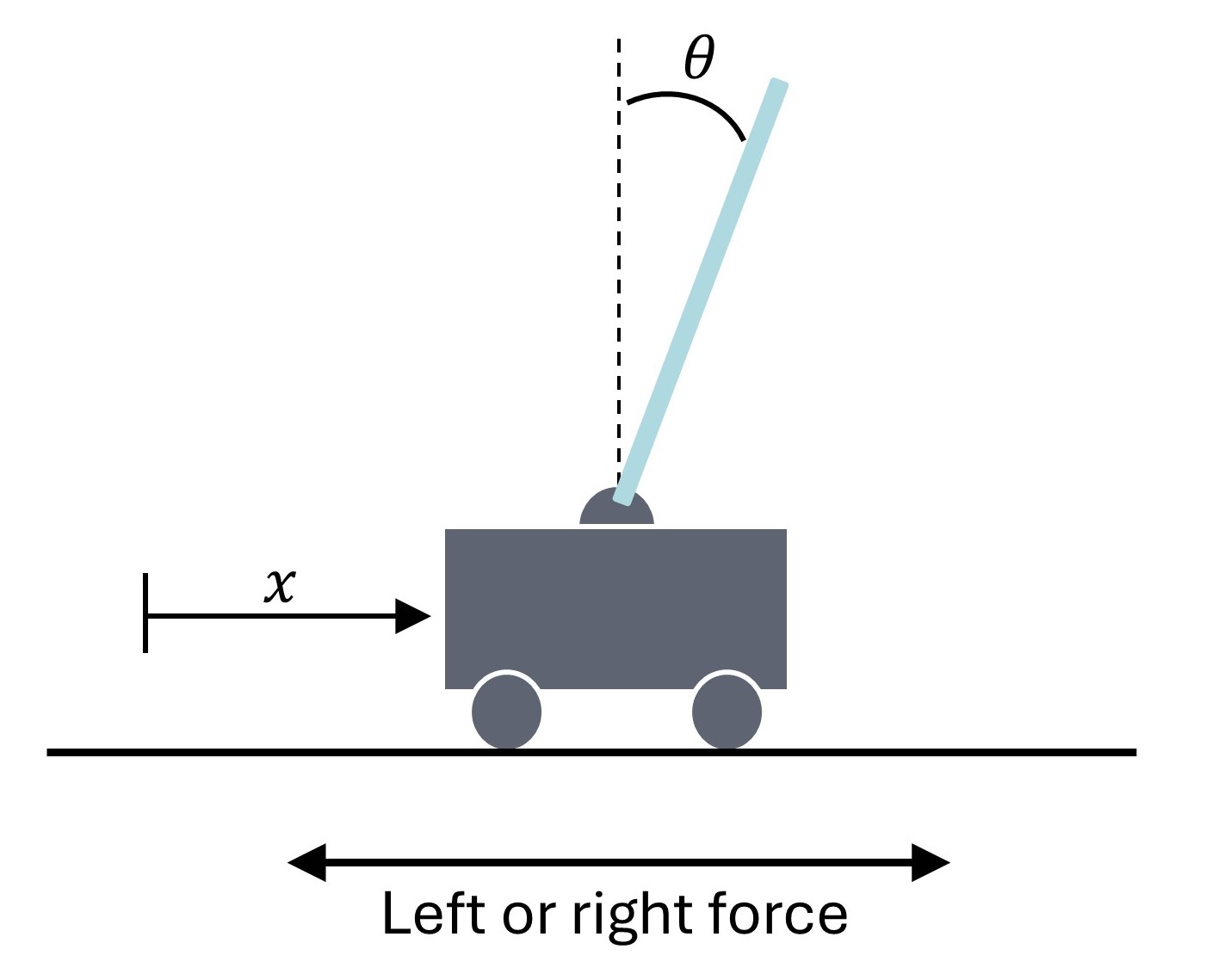}
        \caption{Cartpole Benchmark\label{fig:cartpole_diagram}}
    \end{subfigure}
    \begin{subfigure}[h]{0.7\linewidth}
        \includegraphics[width=\linewidth]{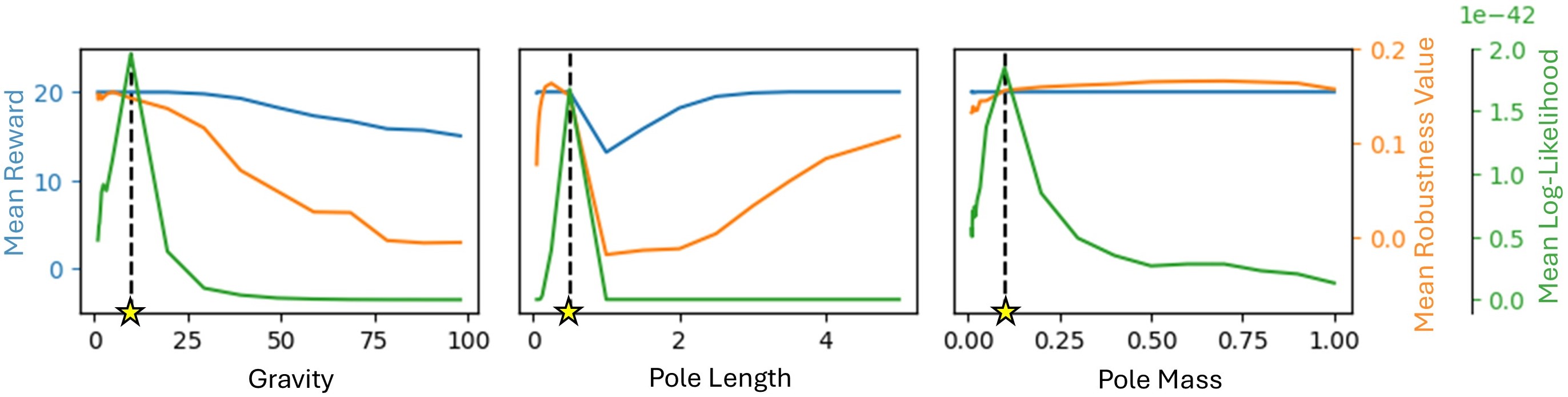}
        \caption{Safety Versus Distribution Shift\label{fig:cartpole_results}}
    \end{subfigure}%
    \caption{OOD inputs to the learning-enabled CPS component do not necessarily lead to safety violations. a) In the cartpole benchmark, a pole is attached to moving cart by a pivot point. The pole must be kept upright by applying left and right forces to the cart, while keeping the cart centered. b) We induce distribution shift in the cartpole's state trajectories by varying the environment parameters. The star indicates the in-distribution parameter selection. In this study, the reward measurement is a ground-truth evaluation of safety. The robustness value is a better indicator of the reward than the trajectory likelihood.}
\end{figure*}

In the cartpole benchmark, a pole is attached by a pivot point to a cart moving along an axis, as shown in Figure~\ref{fig:cartpole_diagram}. The controller must keep the pendulum upright by applying forces to the cart, while keeping the cart centered. The action space is a discrete scalar value, indicating a force pushing the cart to the left or the right. The observation space $s$ is a continuous four-dimensional vector, capturing the cart position $x$, cart velocity $\dot{x}$, pole angle $\theta$, and pole angular velocity $\dot{\theta}$,
$$
s = \begin{bmatrix}
    x & \dot{x} & \theta & \dot{\theta}
\end{bmatrix}^{\rm T}.
$$
A simulation episode is terminated as soon as the pole angle exceeds 12 degrees from the vertical axis or the cart position exceeds 2.4 units from the origin. The controller is rewarded for every time step when these conditions are met. We train a Deep Q Learning (DQN) agent with one hidden layer of 64 neurons to maximize this reward.

Following the reward function, we define a safe cartpole system as one that maintains a pole angle less than 12 degrees and a cart position less than 2.4 units from the origin for 20 time steps. The corresponding signal temporal logic (STL) specification and robustness value are
\begin{align*}
\varphi(s,t) &=  \square_{[0, 19]}\left( \left| s_{t,3} \right| < 12 \right) \wedge \left( \left| s_{t,0} \right| < 2.4\right),\\
\rho^\varphi(s,t) &= \min_{t'\in[t,t+19]} \min \left( 12 - \left|s_{t',3}\right|, \;2.4 - \left|s_{t',0}\right| \right),
\end{align*}
respectively. The flexibility of the STL framework allows many alternative properties to be monitored. For example, one may want that the cart position eventually reaches a goal location.

In our simulation experiments, our goal is to induce a distribution shift in the trajectories of the cartpole states and observe the resulting ability (or inability) of the system to satisfy the STL specification. To this end, we vary the gravity, pole length, and pole mass parameters of the cartpole environment, selecting specific values based on prior literature \cite{mohammed2021benchmark} and holding the remaining parameters constant. The resulting changes in system behavior lead to OOD inputs (i.e., states) to the neural network controller. Simulations are run from initial states to either termination or a maximum 20 time steps. Figure~\ref{fig:cartpole_results} shows the reward, STL robustness value, and trajectory log-likelihood calculated over each parameter selection and averaged over 100 simulations.

In this example, we use the reward to capture a ground-truth binary evaluation of the system's ability to satisfy the safety specifications at each time step. The measurement is incremented with each step when the pole angle and cart position are within the allowable ranges (with maximum value 20). According to the reward measurements, we find that the trained controller is robust on average to changes in the pole mass, but tends to perform worse for increasing gravity values and a specific range of the pole length.

The STL robustness value provides a more granular measurement of safety specification satisfaction. We expect this metric to track the reward function, as in this particular example, the two metrics capture the same exact performance metrics.

The log-likelihood of the 20-step simulation trajectories, calculated via Gaussian kernel density estimation, is a measure of similarity to the 20-step trajectories sampled from the training environment.\footnote{The low order of magnitude is an artifact from normalizing high-dimensional data.} This characterizes the distribution of state trajectories, peaking only for the in-distribution trajectories. Notably, for many parameter settings, the log likelihood drops sharply, while the robustness value remains high. Although OOD inputs to the cartpole controller occur, they do not necessarily lead to safety violations due to the controller's ability to generalize.
\section{Related Work}
\label{sec:related}
\textbf{OOD Detection.}
    Out-of-distribution (OOD) detection has been of significant research focus, particularly in exploiting the statistical or geometric differences between in-distribution and OOD data for standalone deep learning models~\citep{kaur, super_2, hendrycks2016baseline, unsuper_2, csi, aux, iDECODe, semantic}. OOD detection has also been explored in the cyber-physical systems (CPS) space via safety envelops for low-dimensional input sensors such as GPS \citep{tiwari2014safety}. More recently, there has been an increasing interest in detecting OOD and adversarial inputs in closed-loop CPS environments that employ high-dimensional sensors, such as cameras \citep{cai2020real, ramakrishna2022efficient, beta-vae-2, arvind, codit,k2021real, sridhar2022towards, time_series_adv, yang2024memory, kaur2024out}. Such methods allow the learning-enabled component to abstain from making likely erroneous predictions. Some of these approaches control false positives in detection either with conformal prediction~\cite{cai2020real, codit, interpretable_ood, kaur2024out} or with a human in the feedback loop~\cite{vishwakarma2024taming}. We argue that the presence of an OOD input does not necessarily imply the violation of a desired safety property for the system, and therefore propose to directly monitor safety properties in a manner that is robust (via adaptive conformal prediction) to OOD scenarios.

\textbf{Online Safety Monitoring.} The use of conformal prediction (CP) for runtime safety monitoring of LE-CPS with theoretical guarantees has been explored in the past~\citep{lars_iccps}. CP and therefore~\citet{lars_iccps}'s approach, however, assume that the inference time distribution of system inputs is the same as the training distribution. Recently, the use of robust conformal prediction (RCP) was proposed to overcome this limitation~\citep{zhao2024robust}. RCP's guarantees, however, still assume a bounded distance between the training and inference distributions. Assuming no distribution shift or bounded distribution shift at runtime~\citep{lars_iccps} may be unrealistic for trustworthy deployment of these systems in the real world. Our approach is built upon adaptive conformal prediction (ACP), which makes no assumptions on the runtime distribution. ACP has also been explored for motion planning with dynamic obstacle avoidance in the past~\cite{lars_adaptive}.

\textbf{Incremental Learning.} Incremental learning algorithms aim to continuously adapt a machine learning model to new classes or new distributions without catastrophically forgetting previously learned knowledge \citep{rebuffi2017icarl, he2020incremental, incremental_learning_image, ensemble-incremental}. For CPS, it is often desirable that the learning-enabled components adapt when the environment changes, and incremental learning is an efficient way to achieve this \citep{cps-incremental, reis2020unsupervised-incremental}. Previous approaches \citep{incremental_learning_image,yang2023incremental} have trained new leaf classifiers to handle the new distributions and reduce forgetting. Inspired by this line of work, we employ a set of predictors and corresponding distributions, which we dynamically select from at runtime. To the best of our knowledge, this work is the first to explore incremental learning with ACP for online safety monitoring with guarantees under any runtime distribution.
\section{Background}
\label{sec:bkgrnd}

In this section, we provide a basic but necessary overview of signal temporal logic and adaptive conformal prediction.
\subsection{Signal Temporal Logic and Robustness Value}
Signal temporal logic (STL)~\citep{maler2004monitoring, stl} is a real-time temporal logic for specifying logical properties of signals. A signal is a function that maps time $t \in \mathcal{T}$ to the state $s \in \mathcal{S}$ of a continuous-time system. The syntax of an STL formula $\varphi$ is defined as
$$
  \varphi :=                            
    \mu                           \ | \
    \neg\varphi                   \ | \
    \varphi_1\wedge\varphi_2      \ | \
    \square_{[a,b]}\varphi        \ | \
    \Diamond_{[a,b]}\varphi        \ | \
    \varphi_1\mathcal{U}_{[a,b]}\varphi_2,
$$
where the signal predicate $\mu$ is a formula $f: \mathcal{S} \rightarrow \mathbb{R}$ with $f(s) > 0$ and $b>a\geq0$. The symbols $\wedge$, $\square$, $\Diamond$, and $\mathcal{U}$ denote the \textit{intersection}, \textit{always}, \textit{eventually}, and \textit{until} operators, respectively. A signal value $s$ satisfies ($\models$) an STL formula $\varphi$ at time $t$ under the following conditions:
\begin{align*}                         
    (s,t) \models \varphi                       &\quad \Leftrightarrow \quad    \mu(s(t))>0           
    \\
    (s,t) \models \neg\varphi               &\quad \Leftrightarrow \quad    \neg((s,t)\models \varphi)          
    \\
    (s,t) \models \varphi_1\wedge\varphi_2  &\quad \Leftrightarrow \quad    (s,t) \models \varphi_1\wedge (s,t) \models \varphi_2  
    \\
    (s,t) \models \square_{[a,b]}\varphi    &\quad \Leftrightarrow \quad    \forall t'\in[t+a,t+b], (s,t') \models \varphi 
    \\
    (s,t) \models \Diamond_{[a,b]}\varphi   &\quad \Leftrightarrow \quad    \exists t' \in [t+a, t+b], (s,t')\models \varphi    
    \\
    (s,t) \models \varphi_1\mathcal{U}_{[a,b]}\varphi_2     &\quad \Leftrightarrow \quad    \exists t' \in [t+a, t+b], (s,t')\models \varphi_2 \\ &\qquad\qquad \wedge \forall t'' \in [t,t'], (x,t'')\models \varphi_1.
\end{align*}

The above conditions produce a binary determination, indicating whether or not a signal at time $t$ satisfies the specified STL formula. For more granular evaluation, a robustness value $\rho^{\varphi}$ can be calculated to measure the degree of satisfaction (or violation). This can be applied to any STL formula, as follows:
\begin{align*}                         
    \rho^{\varphi}(s,t)                         &\quad \Leftrightarrow \quad    \mu(s(t))           
    \\
    \rho^{\neg \varphi}(s,t)                    &\quad \Leftrightarrow \quad   -\rho^{\varphi}(s,t)          
    \\
    \rho^{\varphi_1\wedge\varphi_2}(s,t)    &\quad \Leftrightarrow \quad    \min(\rho^{\varphi_1}(s,t), \rho^{\varphi_2}(s,t))  
    \\
    \rho^{\square_{[a,b]}\varphi}(s,t)      &\quad \Leftrightarrow \quad    \min_{t'\in[t+a,t+b]} \rho^{\varphi}(s,t')
    \\
    \rho^{\Diamond_{[a,b]}\varphi}(s,t)     &\quad \Leftrightarrow \quad    \max_{t'\in[t+a,t+b]} \rho^{\varphi}(s,t')   
    \\
    \rho^{\varphi_1\mathcal{U}_{[a,b]}\varphi_2}(s,t)       &\quad \Leftrightarrow \quad    \max_{t'\in[t+a, t+b]}(\min(\rho^{\varphi_2}(s,t'), \min_{t'' \in [t,t']}\rho^{\varphi_1}(s,t'')).
\end{align*}

A signal value $s$ satisfies an STL formula $\varphi$ at time $t$ if and only if the corresponding robustness value is positive:
$$
(s,t) \models \varphi \Leftrightarrow \rho^{\varphi}(s,t)>0.
$$
\subsection{Adaptive Conformal Prediction} \label{sec:acp_background}
Conformal prediction~\citep{saunders1999transduction,vovk1999machine} is a statistical method, applicable to any predictive model $f:x\rightarrow y$, for obtaining prediction regions with a guaranteed probability of containing the correct label. Inductive conformal prediction (ICP)~\citep{papadopoulos2002inductive} is a variant of traditional conformal prediction that is commonly employed for its reduced computational burden. ICP requires a calibration set $D_{\text{cal}} = \{(x_i,y_i)\}_{i=1,2,\dots,n}$ held out from the training data. Given a non-conformity score (NCS) function, which measures the degree of similarity between new samples and the calibration data (e.g., prediction residual), a simple statistical analysis on the calibration set can generate prediction region $C(x_{n+1})$ for a new test point $x_{n+1}$ with unknown label $y_{n+1}$:
$$
\mathbb{P}\left( y_{n+1} \in C\left(x_{n+1}\right)\right) \geq 1-\delta,
$$
where $\delta\in(0,1)$ is the targeted coverage. For the remainder of this paper, we will sometimes refer to \textit{inductive conformal prediction} simply as \textit{conformal prediction}.

Crucially, ICP requires that $(x_1,y_1),\dots,(x_n,y_n), (x_{n+1},y_{n+1})$ be exchangeable. This assumption is violated in the cases of dependent (e.g., time-series) and out-of-distribution data. To address this issue,~\citet{acp} proposed the adaptive conformal prediction (ACP) framework, where the data generating distribution for new inputs can change from the underlying training distribution.

To achieve marginal $1-\delta$ coverage even with the data distribution shifting over time, ACP adaptively re-estimates a significance level $\delta_t$ at each time step $t$ and uses it to generate the prediction region based on the most recent observations: 
$$
\delta_{t+1} = \delta_{t} + \gamma(\delta-e_t).
$$
Here, $\gamma$ is the learning rate, and $e_t$ is the error at time $t$ estimated from the empirical miscoverage frequency of the current prediction region:
$$
e_t = \left\{\begin{matrix}
1, \text{ if  }Y_t \notin C_t,
\\ 
\;\;0, \text{ otherwise, }
\end{matrix}\right.
$$
where $C_t$ is the prediction region including all those predictions whose NCS lies in the $(1-\delta_t)^{\rm th}$ quantile of the calibration NCS set. The NCS set is updated online with the current observations.

While allowing for greater flexibility to OOD data, the ACP framework additionally obtains marginal coverage guarantees.
\begin{prop}[\citeauthor{acp}, \citeyear{acp}] With probability one we have that, for all $T \in \mathbb{N},$
    $$\left | \frac{1}{T} \sum_{t=1}^T e_t -\delta \right| \leq \frac{\max\{\delta_1, 1-\delta_1\}+\gamma}{T \gamma}.$$
\end{prop}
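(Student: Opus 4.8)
The plan is to telescope the recursive update $\delta_{t+1} = \delta_t + \gamma(\delta - e_t)$ and then control the resulting residual by showing that the sequence $\delta_t$ never strays far outside $[0,1]$. Summing the recursion from $t=1$ to $T$ gives $\delta_{T+1} = \delta_1 + \gamma T \delta - \gamma \sum_{t=1}^T e_t$, which rearranges to
$$\frac{1}{T}\sum_{t=1}^T e_t - \delta = \frac{\delta_1 - \delta_{T+1}}{\gamma T}.$$
Taking absolute values reduces the entire claim to a uniform bound on $|\delta_1 - \delta_{T+1}|$, so everything hinges on controlling the range of $\delta_{T+1}$. I note at the outset that this identity is purely deterministic: it holds for every realization of the error sequence $\{e_t\}$, so the ``with probability one, for all $T$'' phrasing of the statement will follow automatically once a pathwise bound is established.

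The heart of the argument, and what I expect to be the main obstacle, is proving that $\delta_t \in [-\gamma, 1+\gamma]$ for every $t$. I would establish this by induction, but the induction only closes after extracting the right structural fact about the prediction region $C_t$. The key observation is a saturation property of the quantile construction: when $\delta_t < 0$, the $(1-\delta_t)^{\rm th}$ quantile exceeds $1$, so $C_t$ contains the entire space and hence $e_t = 0$; symmetrically, when $\delta_t > 1$, the quantile falls below $0$, so $C_t$ is empty and $e_t = 1$. In words, once $\delta_t$ overshoots an endpoint of $[0,1]$, the error feedback in the update immediately pushes it back inward. Articulating this saturation cleanly from the quantile definition of $C_t$ is the delicate step; once it is in place the remaining work is mechanical.

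With the saturation property in hand, the induction is straightforward. If $\delta_t \le 1$ then $\delta_{t+1} \le \delta_t + \gamma\delta \le 1 + \gamma$, while if $1 < \delta_t \le 1+\gamma$ the saturation forces $e_t = 1$, so $\delta_{t+1} = \delta_t - \gamma(1-\delta) < \delta_t \le 1+\gamma$; the lower bound $\delta_t \ge -\gamma$ follows by the mirror-image argument splitting on whether $\delta_t \ge 0$ or $\delta_t < 0$. Starting from $\delta_1 \in [0,1]$, this confines $\delta_{T+1}$ to $[-\gamma, 1+\gamma]$ for all $T$ simultaneously, which is exactly the uniform-in-$T$ control the telescoped identity requires.

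Finally I would assemble the bound. Since $\delta_{T+1} \in [-\gamma, 1+\gamma]$, the difference satisfies $\delta_1 - \delta_{T+1} \le \delta_1 + \gamma$ and $\delta_1 - \delta_{T+1} \ge -(1-\delta_1) - \gamma$, so $|\delta_1 - \delta_{T+1}| \le \max\{\delta_1,\, 1-\delta_1\} + \gamma$. Substituting into the telescoped identity yields
$$\left|\frac{1}{T}\sum_{t=1}^T e_t - \delta\right| = \frac{|\delta_1 - \delta_{T+1}|}{\gamma T} \le \frac{\max\{\delta_1,\, 1-\delta_1\} + \gamma}{T \gamma},$$
which is precisely the claimed inequality. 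The only genuinely nontrivial ingredient is the boundedness lemma of the second paragraph; the telescoping and the closing arithmetic are routine.
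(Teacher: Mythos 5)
Your proof is correct and is essentially the argument from the cited source (the adaptive conformal prediction paper of Gibbs and Cand\`es); note that this paper states the proposition as an imported result and does not reproduce a proof of its own. The three ingredients you identify --- the deterministic telescoping identity, the saturation property of the quantile-based region (so that $\delta_t<0$ forces $e_t=0$ and $\delta_t>1$ forces $e_t=1$), and the resulting inductive confinement of $\delta_t$ to $[-\gamma,\,1+\gamma]$ --- are exactly those of the original argument, and your closing bound $|\delta_1-\delta_{T+1}|\leq \max\{\delta_1,1-\delta_1\}+\gamma$ is computed correctly.
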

In particular, $\lim_{T\rightarrow \infty} \sum_{t=1}^T e_t = \delta$. This proposition states that ACP obtains the correct coverage frequency at the significance level $\delta$ over long intervals of time, irrespective of any assumptions on the data-generating distribution.

\section{Problem Setting and Assumptions} \label{sec:setting}
Our goal is to monitor the safety of learning-enabled cyber-physical systems (LE-CPS). Specifically, we consider a class of discrete-time dynamical systems whose dynamics are not necessarily known. We assume that 1) there exists a desired safety property, 2) the safety property is static throughout monitoring, 3) the system states over time are available to the monitor, and 4) our safety monitor has error-free knowledge of the environment map, system states, and obstacles. This set of assumptions has important implications. First, the only required information about the LE-CPS is its system state. Second, the final assumption isolates safety monitoring from related but separate problems (e.g., system state estimation and object detection).

For the purposes of this work, we consider out-of-distribution inputs to a learning-enabled component as those sampled from a non-identical distribution to the component's training distribution. Out-of-distribution scenarios are those that generate out-of-distribution inputs. In the context of this work, OOD data are by extension also not exchangeable with the calibration set required for conformal prediction methods. It should be noted that the notion of OOD is not well-defined in the machine learning community, and assumptions of in-distribution data can often be false~\cite{recht2019imagenet}. For this reason, it is even more important that our proposed method makes no assumption on the distribution of the system's states. In evaluations, we use log-likelihood and divergence metrics as proxies for measuring change in distribution.

\section{Safety Monitoring}
\label{sec:failire_pred_tech}

In this section, we present our problem statements and proposed approach for this task. Our ultimate goal is to predict a safety violation by the LE-CPS in the near future and raise an alarm accordingly. This can be achieved by predicting the robustness value of a specified signal temporal logic (STL) formula on the system's future states.

\begin{problem}[Safety Monitoring]\label{prob:safety_mon}
    Consider a learning-enabled cyber-physical system with state $s$, operating over time period $t\in[0,T)$. Given an STL safety specification $\varphi$, predict the robustness score $\hat{\rho}^{\varphi}(s,t)$ at each time step $t$.
\end{problem}

The efficacy of any solution to Problem~\ref{prob:safety_mon} relies on the accuracy of the robustness score prediction. To provide defense against inaccurate predictions, we also consider how incorporating prediction regions may aid our safety monitor's performance.

\begin{problem}[Safety Monitoring with Probabilistic Guarantees]\label{prob:safety_mon_w_guarantees}
    Consider a learning-enabled cyber-physical system with state $s$, operating over time period $t\in[0,T)$. Given an STL safety specification $\varphi$ and a confidence level $\delta$, compute a prediction region $C$ on the robustness score $\hat{\rho}^{\varphi}(s,t)$ at each time step $t$ such that the average coverage probability of this prediction region converges to $1-\delta$. That is,
    $$
    1-\delta-p_1 \leq \frac{1}{T}\sum^{T-1}_{t=0} {\rm Prob}\left[ \rho^\varphi(s,t) \in C\left(\hat{\rho}^\varphi(s,t)\right) \right] \leq 1-\delta + p_2,
    $$
    where $\,\lim_{T\rightarrow\infty} p_1= 0$ and $\,\lim_{T\rightarrow\infty} p_2= 0$.
\end{problem}

In these problem formulations, our safety monitor only requires knowledge of the LE-CPS state, meaning that it is applicable to any LE-CPS. Furthermore, we reiterate that we make no assumption on the distribution of $s$, allowing for flexibility to any LE-CPS in any in- or out-of-distribution scenario.

We motivate our proposed solution to these problems with three key statements.
\begin{enumerate}
    \item \textit{Adaptive conformal prediction produces prediction regions with probabilistic guarantees under no assumption on the data distribution.}
    \item \textit{Adaptive conformal prediction enlarges prediction regions to compensate for prediction error.}
    \item \textit{Incremental learning reduces prediction error on novel inputs.}
\end{enumerate}
From the above, we hypothesize that while adaptive conformal prediction can achieve the guarantees desired in Problem~\ref{prob:safety_mon_w_guarantees}, it may lead to overly conservative safety violation predictions, especially for OOD data. However, by using the technique in concert with incremental learning, this conservatism can be limited.

Figure~\ref{fig:block_diagram} summarizes our proposed safety monitoring approach, which leverages both adaptive conformal prediction and incremental learning. Given an $h$-step history of the system's states, a trajectory predictor model predicts an $H$-step horizon of the system's future states. Based on this prediction, a robustness value is computed following the STL framework. Adaptive conformal prediction (ACP) then obtains a prediction region on this robustness value. An alarm can be raised if a safety violation is detected using this prediction region. Once we have the ground truth available, we can calculate error in the predicted robustness value. If this error exceeds a threshold, the corresponding history and horizon pair is saved. The original trajectory predictor is fine-tuned on these saved samples, adding a new predictor to a predictor set. Using a method based on K-means clustering, a model is dynamically selected from this predictor set at runtime.

\begin{figure}[!t] 
    \centering
    \includegraphics[width=0.6\columnwidth]{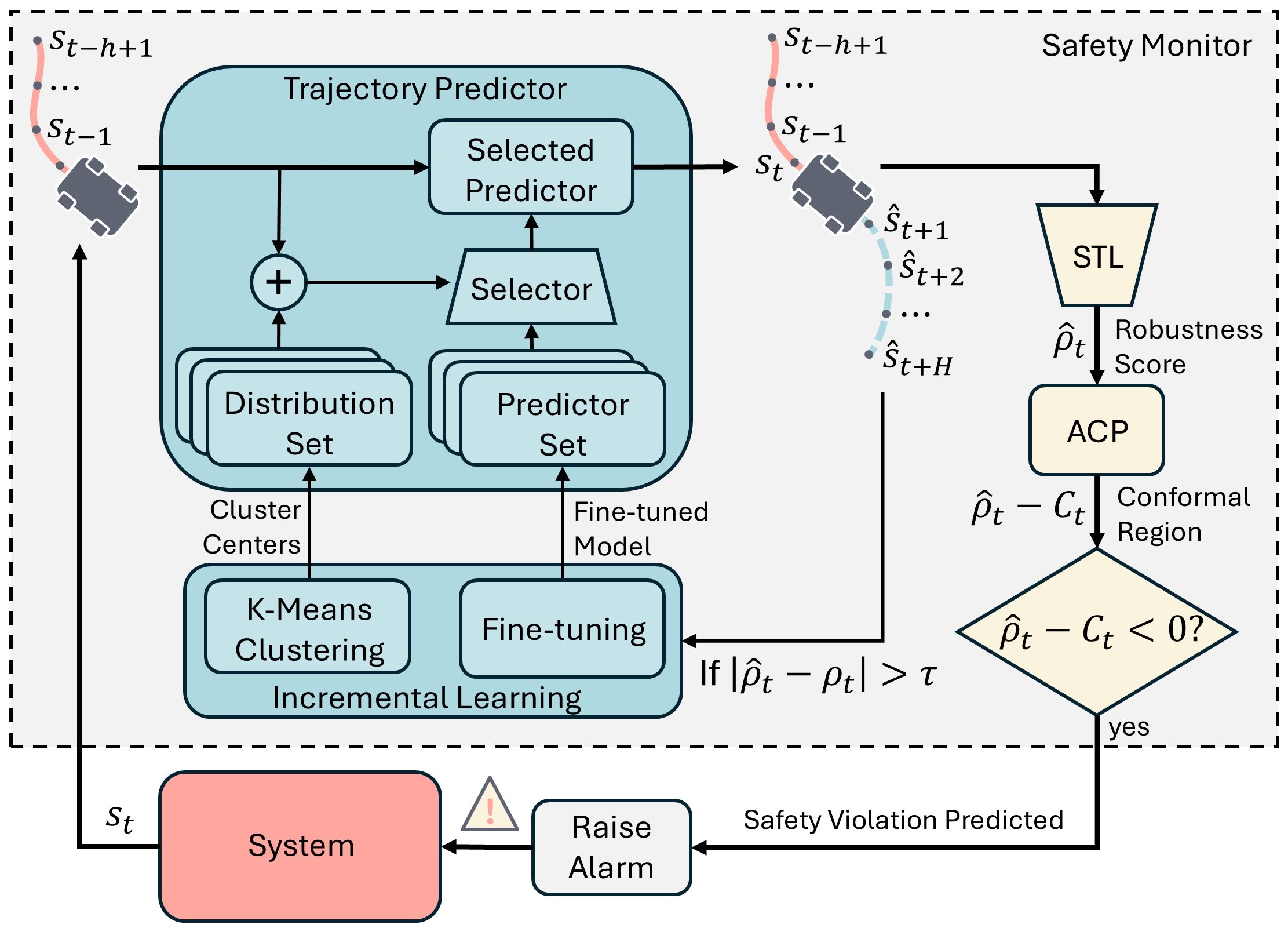}
    \caption{Safety monitoring for learning-enabled cyber-physical systems. Observing only the black-boxed system's states, we employ a trajectory predictor, updated via incremental learning, to predict the system's future states. On this prediction, we use the STL and ACP frameworks to obtain a prediction region on the robustness value. A simple condition indicates whether a violation has been predicted.
    \label{fig:block_diagram}}
\end{figure}

\subsection{Adaptive Conformal Prediction for Probabilistic Guarantees}
We require the adaptive conformal prediction (ACP) framework~\citep{acp} to provide prediction regions with probabilistic guarantee even when OOD inputs to the trajectory predictor occur. ACP predicts intervals in which the true robustness value is guaranteed to lie with high probability. 

\begin{lemma} Let $\gamma$ be a learning rate, $\delta \in (0,1)$ be the target failure probability, $t_0$ be the initial time, and $T$ be the total number of time 
steps. Let $C_t$ be the prediction region threshold obtained at time $t$ by adaptive conformal prediction. Then, for the robustness value prediction errors $\hat{\rho}_t^\varphi - \rho_t^\varphi$, it holds that:
    \begin{equation}        
        1-\delta-p_1 \leq \frac{1}{T} \sum_{t=t_0}^{T+t_0-1} {\rm Prob}(\hat{\rho}_{t}^{\varphi}-\rho_{t}^{\varphi}\leq C_{t}) \leq 1-\delta+p_2,
        \label{eq:lars_adaptive}
    \end{equation}
    with constants $p_1=\frac{\delta+\gamma}{T\gamma}$ and $p_2=\frac{(1-\delta)+\gamma}{T\gamma}$ that satisfy $\lim_{T\rightarrow \infty} p_1=0$ and $\lim_{T\rightarrow \infty} p_2=0$. 
    \label{lem:lars_adaptive}
\end{lemma}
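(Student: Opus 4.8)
The plan is to read the lemma as a direct instantiation of the adaptive conformal prediction guarantee, with the non-conformity score taken to be the one-sided prediction residual $\hat{\rho}_t^\varphi-\rho_t^\varphi$ and the prediction region taken to be the half-line determined by the adaptively chosen threshold $C_t$. Under this identification the miscoverage indicator from Section~\ref{sec:acp_background} becomes $e_t=\mathbbm{1}[\hat{\rho}_t^\varphi-\rho_t^\varphi> C_t]$, so the coverage event is exactly $\{\hat{\rho}_t^\varphi-\rho_t^\varphi\le C_t\}$ and ${\rm Prob}(\hat{\rho}_t^\varphi-\rho_t^\varphi\le C_t)=1-\mathbb{E}[e_t]$. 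Averaging over the $T$ steps $t=t_0,\dots,T+t_0-1$ then gives $\frac1T\sum_t {\rm Prob}(\hat{\rho}_t^\varphi-\rho_t^\varphi\le C_t)=1-\mathbb{E}\!\left[\frac1T\sum_t e_t\right]$, which reduces the entire lemma to a two-sided bound on the expected empirical error $\mathbb{E}[\frac1T\sum_t e_t]$ around the target level $\delta$.

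To control the empirical error I would work directly with the ACP update $\delta_{t+1}=\delta_t+\gamma(\delta-e_t)$. Telescoping this recursion over the window gives $\delta_{T+t_0}-\delta_{t_0}=\gamma\,(T\delta-\sum_t e_t)$, hence $\frac1T\sum_t e_t-\delta=(\delta_{t_0}-\delta_{T+t_0})/(T\gamma)$. With the natural initialization $\delta_{t_0}=\delta$, the deviation of the empirical error from $\delta$ is therefore controlled entirely by how far the running significance level $\delta_{T+t_0}$ can drift. The step I expect to be the main obstacle is establishing the corresponding invariant: that $\delta_t$ remains confined to a bounded band for every $t$. The key observation driving this is one-sided---whenever $\delta_t\le 0$ the target coverage $1-\delta_t\ge 1$ forces the region to contain everything, so $e_t=0$ and $\delta_t$ can only increase, while whenever $\delta_t>1$ the region is empty, so $e_t=1$ and $\delta_t$ can only decrease. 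A short induction on $t$ then pins $\delta_t$ into $[-\gamma,\,1+\gamma]$ (indeed into the slightly tighter $[-\gamma(1-\delta),\,1+\gamma\delta]$).

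Substituting the one-sided bounds $-\gamma\le\delta_{T+t_0}\le 1+\gamma$ into the telescoped identity yields the deliberately asymmetric estimate $-\frac{(1-\delta)+\gamma}{T\gamma}\le \frac1T\sum_t e_t-\delta\le \frac{\delta+\gamma}{T\gamma}$; it is precisely this asymmetry, coming from the two different extreme values of $\delta_{T+t_0}$, that produces the distinct constants $p_1$ and $p_2$ rather than the symmetric $\max\{\delta_1,1-\delta_1\}$ form of the proposition in Section~\ref{sec:acp_background}. Because these bounds are deterministic and hold surely for every realization, taking expectations is immediate and transfers the same band to $\mathbb{E}[\frac1T\sum_t e_t]$; this frequency-to-probability passage, while routine, is the one place where care is needed, since $C_t$, $e_t$, and the averaged error are all random whereas $p_1,p_2$ are not. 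Finally, inserting this band into $\frac1T\sum_t{\rm Prob}(\hat{\rho}_t^\varphi-\rho_t^\varphi\le C_t)=1-\mathbb{E}[\frac1T\sum_t e_t]$ flips the inequalities and gives exactly $1-\delta-p_1\le \frac1T\sum_t{\rm Prob}(\cdots)\le 1-\delta+p_2$ with $p_1=\frac{\delta+\gamma}{T\gamma}$ and $p_2=\frac{(1-\delta)+\gamma}{T\gamma}$, both of which vanish as $T\to\infty$.
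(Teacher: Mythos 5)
Your proof is correct, but it takes a different route from the paper. The paper's own proof is essentially a one-line reduction: it cites Corollary 3 of the adaptive-conformal-prediction work of Lindemann et al.\ and observes that the only change is the choice of non-conformity score (the robustness-value residual $\hat{\rho}_t^\varphi-\rho_t^\varphi$ in place of the one-step state-prediction residual $\|Y_t-\hat{Y}_{t-1}^1\|$), so the bound transfers verbatim. You instead re-derive the guarantee from first principles: telescoping $\delta_{t+1}=\delta_t+\gamma(\delta-e_t)$ to get $\frac{1}{T}\sum_t e_t-\delta=(\delta_{t_0}-\delta_{T+t_0})/(T\gamma)$, establishing the invariant $\delta_t\in[-\gamma(1-\delta),\,1+\gamma\delta]$ by the one-sided observation that $\delta_t\le 0$ forces full coverage ($e_t=0$) and $\delta_t>1$ forces an empty region ($e_t=1$), and then passing from the pathwise (almost-sure) bound on the empirical error frequency to the averaged coverage probabilities by taking expectations. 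I checked the induction and the resulting constants: the extreme values of $\delta_{T+t_0}$ do yield exactly the asymmetric pair $p_1=\frac{\delta+\gamma}{T\gamma}$ and $p_2=\frac{(1-\delta)+\gamma}{T\gamma}$, matching the lemma. What your version buys is self-containedness and an explanation of where the asymmetry between $p_1$ and $p_2$ comes from (the two different boundary values of the running significance level), which the paper's citation-based proof leaves opaque; what the paper's version buys is brevity and an explicit statement of the one substantive point your write-up treats implicitly, namely that the ACP machinery is agnostic to the choice of non-conformity score, so replacing the state residual with the robustness residual is the only adaptation needed. The single place to be careful in your argument, which you do flag, is the convention that the quantile index falling outside the calibration set yields $C_t=+\infty$ (resp.\ an empty region), since the invariant on $\delta_t$ fails without it.
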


\begin{proof}
    The proof follows from Corollary 3 of~\citet{lars_adaptive}, where the bound~\eqref{eq:lars_adaptive} is for the error ($1-e_{t}^1$) on one-step ahead state-value prediction: $||Y_{t}-\hat{Y}_{t-1}^{1}|| \leq C_{t}^{1}$. Here, prediction is on the robustness value of the future time-series window, and the error ($1-e_t$) on one-step ahead robustness-value prediction is captured by ${\hat{\rho}}^{\varphi}_{t}-{\rho}^{\varphi}_{t} \leq C_{t}$.
\end{proof}

Lemma~\ref{lem:lars_adaptive} states that the true robustness value $\rho_t^\varphi$ lies within the ACP prediction interval $[\hat{\rho}_t^\varphi-C_t, \infty)$ with probability approaching $1-\delta$ on average over time. We can, therefore, predict that no safety violation will occur if $\hat{\rho}_t^\varphi-C_t > 0$. Further, Lemma~\ref{lem:lars_adaptive} leads to the following theorem.

\begin{theorem} Let $\gamma$ be a learning rate, $\delta \in (0,1)$ be the target failure probability, $t_0$ be the initial time, and $T$ be the total number of time steps. Let $C_t$ be the prediction region threshold obtained at time $t$ by adaptive conformal prediction. If $\hat{\rho}_t^{\varphi} > C_{t} \;\forall t\in[t_0,T]$, then the probability of the system state $s$ satisfying the safety specification $\varphi$ at time $t$ is bounded below on average:
    \begin{equation}
        1-\delta-p_1 \leq \frac{1}{T} \sum_{t=t_0}^{T+t_0-1} {\rm Prob}((s,t)\models \varphi),
        \label{eq:satisfy_safety}
    \end{equation}
    with constant $p_1=\frac{\delta+\gamma}{T\gamma}$ that satisfies $\lim_{T\rightarrow \infty} p_1=0$. 
    \label{thm:satisfy_safety}
\end{theorem}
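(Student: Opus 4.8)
The plan is to derive Theorem~\ref{thm:satisfy_safety} directly from Lemma~\ref{lem:lars_adaptive} together with the defining equivalence of STL satisfaction, $(s,t)\models\varphi \Leftrightarrow \rho^\varphi(s,t)>0$. The entire argument rests on a single pointwise set inclusion: under the theorem's hypothesis $\hat{\rho}_t^\varphi > C_t$, the coverage event appearing inside the probability in Lemma~\ref{lem:lars_adaptive} is contained in the satisfaction event $\{(s,t)\models\varphi\}$. Once this inclusion is in hand, monotonicity of probability and the lower bound of Lemma~\ref{lem:lars_adaptive} finish the proof.

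Concretely, I would first fix a time step $t\in[t_0,T]$ and rewrite the coverage event $\{\hat{\rho}_t^\varphi - \rho_t^\varphi \leq C_t\}$ in the equivalent form $\{\rho_t^\varphi \geq \hat{\rho}_t^\varphi - C_t\}$. The hypothesis $\hat{\rho}_t^\varphi > C_t$ guarantees that the lower bound $\hat{\rho}_t^\varphi - C_t$ is strictly positive, so every outcome lying in this event automatically satisfies $\rho_t^\varphi > 0$. By the robustness criterion recalled in the STL background, $\rho_t^\varphi>0$ is precisely the condition $(s,t)\models\varphi$, and therefore $\{\hat{\rho}_t^\varphi - \rho_t^\varphi \leq C_t\} \subseteq \{(s,t)\models\varphi\}$.

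From the inclusion, monotonicity of probability yields ${\rm Prob}(\hat{\rho}_t^\varphi - \rho_t^\varphi \leq C_t) \leq {\rm Prob}((s,t)\models\varphi)$ for each $t$. Averaging both sides over $t$ from $t_0$ to $T+t_0-1$ preserves the inequality, and chaining the result with the lower bound of Lemma~\ref{lem:lars_adaptive} gives
$$1-\delta-p_1 \leq \frac{1}{T}\sum_{t=t_0}^{T+t_0-1}{\rm Prob}(\hat{\rho}_t^\varphi - \rho_t^\varphi \leq C_t) \leq \frac{1}{T}\sum_{t=t_0}^{T+t_0-1}{\rm Prob}((s,t)\models\varphi),$$
which is exactly the claimed bound, carrying over the same constant $p_1=\frac{\delta+\gamma}{T\gamma}$ and hence the same vanishing limit as $T\to\infty$.

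I do not expect a genuine obstacle here, as the result is an almost immediate corollary of Lemma~\ref{lem:lars_adaptive}; the only point worth flagging is that the two-sided bound of the lemma collapses to a one-sided bound in the theorem. Because the satisfaction event is a \emph{superset} of the coverage event, we can only inherit a lower bound on its average probability, so the $p_2$ (upper) side of Lemma~\ref{lem:lars_adaptive} is intentionally discarded. The substantive content is thus entirely contained in the strict positivity enforced by the hypothesis $\hat{\rho}_t^\varphi>C_t$, which is what converts the ACP coverage guarantee into a safety-satisfaction guarantee.
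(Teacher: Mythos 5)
Your proposal is correct and follows essentially the same route as the paper's own proof: the hypothesis $\hat{\rho}_t^\varphi > C_t$ makes the coverage event $\{\hat{\rho}_t^\varphi - \rho_t^\varphi \leq C_t\}$ imply $\rho_t^\varphi > 0$, hence $(s,t)\models\varphi$, and monotonicity of probability plus the lower bound of Lemma~\ref{lem:lars_adaptive} gives the result. Your write-up is in fact slightly more explicit than the paper's (making the set inclusion and the deliberate discarding of the $p_2$ side explicit), but there is no substantive difference in approach.
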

\begin{proof}
    If $\hat{\rho}^\varphi_t > C_t \; \forall t \in [t_0, T]$, then
    \begin{equation*}
        \hat{\rho}_t^{\varphi}-\rho_{t}^{\varphi}\leq C_{t} \implies
        \rho^\varphi_t > 0, \quad \forall t \in [t_0, T],
    \end{equation*}
    and
    \begin{equation*}
        {\rm Prob}\left(\hat{\rho}_t^{\varphi}-\rho_{t}^{\varphi}\leq C_{t}\right) \leq
        {\rm Prob}\left(\rho^\varphi_t > 0\right), \quad \forall t \in [t_0, T].
    \end{equation*}
    Since $\rho^\varphi_t > 0 \Leftrightarrow (s,t)\models \varphi$ for any $t$,  ~\eqref{eq:satisfy_safety} then follows from Lemma~\ref{lem:lars_adaptive}.
\end{proof}

Theorem~\ref{thm:satisfy_safety} provides a guarantee on the overall safety of the system if no violations are predicted. In other words, if $\hat{\rho}_t^{\varphi} \geq C_{t}$ for all $t \in [t_0, T]$, the probability of the system satisfying the safety specification $\varphi$ 
will be at least $(1-\delta-p_1)$ on average.

\subsection{Incremental Learning for Error Reduction on Out-of-Distribution Inputs}
Our technique makes no assumptions on the distribution of the system's states, thus allowing for settings where the trajectory predictor can make erroneous predictions on OOD inputs. We employ an incremental learning (IL) method, adapted from~\citet{yang2023incremental}, to guard against the hyper-conservatism that may result from using ACP on such data. We select the trajectory predictor at runtime from a distribution-predictor set $DP = \{(D_1,p_1),(D_2,p_2),\dots,(D_k,p_k)\}$, where each predictor $p_i$ in the set is trained on one (seen) distribution $D_i$ of the system's states. For trajectory predictions, we select the predictor corresponding to the distribution with the highest probability of generating the input state trajectory.

With $W$ as the set of high-error prediction trajectories collected at runtime, we now describe our IL approach on $W$. We use K-means clustering to generate prototypes of trajectories in $W$, and we consider those clusters which have a high ratio of samples from $W$ using a threshold on this ratio.\footnote{Prototypes can be generated using alternative methods, such as the \textit{memories} introduced by~\citet{yang2023incremental,interpretable_ood}.} Each cluster's center is then labeled with the distribution for which it is a prototype by estimating its distribution from the samples in the cluster.\footnote{For convenience, we overload the notation $D_i$ to indicate both the distribution and the cluster center obtained from samples of the distribution.} At inference time, we determine which cluster center provides the closest fit to the input data via clusters' distributions and select the corresponding predictor. By performing this selection dynamically, our IL technique mitigates the challenges of catastrophic forgetting that are common in continual update methods~\citep{mccloskey1989catastrophic}.

Algorithm~\ref{alg:inc_learning} shows our method for updating $DP$. The algorithm takes in a set $W$ of the trajectory history-horizon pairs for which the state predictor makes a prediction with error greater than some threshold $\tau$. This set can be collected at the inference time of Algorithm~\ref{alg:fail_pred}, which we will introduce later. Using K-means clustering, new cluster centers are generated (line~\ref{alg:line:kmeans}). For each new distribution, a new  predictor is trained on the trajectories in $W$ belonging to that distribution (line~\ref{alg:line:train}). In line~\ref{alg:line:append_dp}, the resulting distribution and predictor are appended to $DP$.

\begin{algorithm}[t]
    \caption{Incremental Learning for State Trajectory Predictors}
    \label{alg:inc_learning}
    \begin{algorithmic}[1]
       \STATE{\bfseries Input:} set $W$ of history-horizon pairs for fine-tuning, existing distribution-predictor set $DP=\{(D_1,p_1), \ldots, (D_k,p_k)\}$  of distributions $D_i$ and predictors $p_i$
       \STATE{\bfseries Output:} new distribution-predictor set $DP'=\{(D_1,p_1),$ $\ldots,(D_k,p_k), (D_{k+1},p_{k+1}), \ldots\}$
       \STATE Generate new K-means cluster centers $D_{k+1}, D_{k+2},\ldots$ from $W$\label{alg:line:kmeans}
       \FOR{$D_i \in \{D_{k+1}, D_{k+2},\ldots\}$}
            \STATE Train a predictor $p_i$ on $\{w \in W\ : w \sim D_i \}$ \label{alg:line:train}
            \STATE Append $(D_i, p_i)$ to $DP$ \label{alg:line:append_dp}
       \ENDFOR
       \STATE Return $DP$
    \end{algorithmic}
\end{algorithm}

\subsection{Safety Monitoring Algorithm}
Algorithm~\ref{alg:fail_pred} presents the proposed method for predicting an LE-CPS's safety violation. For a given STL safety specification $\varphi$, using a history of the system's past $h$ states, the algorithm raises an alarm when the system is predicted to violate $\varphi$ in the next $H$ time steps.

\begin{algorithm}[t]
    \caption{Safety Monitor for LE-CPS}
    \label{alg:fail_pred}
    \begin{algorithmic}[1]
        \STATE{\bfseries Input:} safety specification $\varphi$, distribution-predictor set $DP=\{(D_1,p_1),(D_2,p_2), \ldots \}$
        \STATE{\bfseries Parameter:} target failure probability $\delta\in(0,1)$, learning rate $\gamma$,  state history length $h$, prediction horizon $H$, start time $t_0>h+H$, threshold $\tau$ on the prediction error
        \STATE{\bfseries Initialize:} $\delta_{t} \gets \delta$, $R \gets \{\}$, $W \gets \{\}$
        
        \FOR{$t$ from $h$ to $\infty$} \label{alg:line:start}
            \STATE Select $p_i \in DP$ s.t. $[s_{t-h+1},\ldots,s_t]^{\rm T} \sim D_i$ \label{alg:line:dp}
            
            \STATE $\hat{s}_{t+1},\ldots,\hat{s}_{t+H}$ = $p_i(s_{t-h+1},\ldots,s_t)$ \label{alg:line:predict}
            
            \STATE $\hat{\rho}^{\varphi}_t = \hat{\rho}^{\varphi}(\hat{s}, t+1)$ \label{alg:line:rhohat}

            \IF{$t>h+H$}
            \STATE $\rho^{\varphi}_{t-H} = \rho^{\varphi}(s, t-H+1)$ \label{alg:line:rho}

            \STATE $R_{t} = \hat{\rho}^{\varphi}_{t-H}-\rho^{\varphi}_{t-H}$

            \STATE Append $R_t$ to the NCS set $R$ \label{alg:line:append_ncs} \hspace*{4.5em}%
            \rlap{\raisebox{\dimexpr.5\normalbaselineskip+.5\jot}{\smash{$\left.\begin{array}{@{}c@{}}\\{}\end{array}\color{red}\right\}%
          \color{red}\begin{tabular}{l}For ACP\end{tabular}$}}}
            \ENDIF

            \IF{$t>t_0$} 
            
            \STATE $C_{t}=\lceil (t)(1-\delta_{t}) \rceil^{\rm th}$ smallest $R_{i} \in R$ \label{alg:line:ct}
            
            \STATE$e_t = 0$ \text{if} $R_{t} \leq C_{t}$, 1 \text{o.t.w.} \hspace*{5.3em}%
            \rlap{\smash{$\left.\begin{array}{@{}c@{}}\\{}\\{}\end{array}\color{red}\right\}%
          \color{red}\begin{tabular}{l}For ACP\end{tabular}$}}
            
            \STATE $\delta_{t+1} = \delta_{t} + \gamma(\delta-e_t)$ \label{alg:line:deltat}

            \IF {$\left| R_t \right| > \tau$}
            \STATE Append $[s_{t-h-H+1},\ldots,s_t]^{\rm T}$ to $W$ \hspace*{0.8em}%
            \rlap{\smash{$\left.\begin{array}{@{}c@{}}\\{}\\{}\end{array}\color{red}\right\}%
          \color{red}\begin{tabular}{l}For IL\end{tabular}$}}
            \ENDIF
            
            \IF{$\hat{\rho}^{\varphi}_t < C_{t}$} \label{alg:line:compare}
            \STATE \text{Raise Alarm}
            \ENDIF
            
            \ENDIF
        \ENDFOR
    \end{algorithmic}
\end{algorithm}

The description of the algorithm is as follows. In lines~\ref{alg:line:dp} and~\ref{alg:line:predict}, the system's states over the next $H$ steps is predicted by the state trajectory predictor, selected from the distribution-predictor set $DP$. From the predicted states, the robustness $\hat{\rho}_t^{\varphi}$ of the system with respect to $\varphi$ at time $t$ for the next $H$ time steps is computed in line~\ref{alg:line:rhohat}. We raise an alarm if $\hat{\rho}_t^{\varphi} < C_t$ in lines 20 and 21.

$C_t$ is updated at runtime based on the adaptive conformal calibration set of non-conformity scores (NCS), which is gathered from recent observations. This is done as follows. Starting from time $t > h + H$, NCS are calculated in lines~\ref{alg:line:rho} to~\ref{alg:line:append_ncs}. This score $R_t$ at time $t$ is the difference between the predicted robustness $\hat{\rho}^\varphi_{t-H}$, and the actual robustness $\rho^\varphi_{t-H}$ of the system for the time interval $[t-H+1, t]$. Following~\citet{lars_adaptive}, we choose a time-lagged NCS, as only the observed states in the recent past $[t-H+1, t]$ are accessible at the current time step $t$ for computing the actual (or ground-truth) robustness $\rho^\varphi_{t-H}$ for the safety property $\varphi$ defined on the $H$-step window. Safety violation predictions, therefore, starts from the time $t > h + H$, to allow for sufficient NCS to be collected for initializing ACP. Lines~\ref{alg:line:ct} to~\ref{alg:line:deltat} perform the steps required to calculate the prediction region threshold $C_t$ based on the seen NCS. This threshold is calculated as the $(1-\delta_t)^{\rm th}$ quantile of the empirical distribution of the NCS set, where $\delta_t$ is adaptively updated at each step based on the learning rate $\gamma$ and the coverage error $e_t$ of the latest prediction region at time $t$.

In lines 17 to 19, data is gathered for incremental learning. If the error of the robustness score prediction exceeds a threshold, then the most recent $h+H$ states are saved for fine-tuning later.
\section{Experiments}
\label{sec:failire_pred_exp}
In this section, we describe two case studies in which we evaluate our safety monitoring technique, our implementation details, and the results. Details for repeating our experiments can be found in Appendix~\ref{app:repeatability}. For each case study, we empirically confirm Lemma~\ref{lem:lars_adaptive} and Theorem~\ref{thm:satisfy_safety}. We additionally evaluate the empirical effect of uncertainty quantification methods and incremental learning on the safety monitoring task. We evaluate the following uncertainty quantification methods: point prediction, conformal prediction, robust conformal prediction, and adaptive conformal prediction.

Through our evaluation of robust conformal prediction (RCP), we compare our proposed method to direct robust predictive runtime verification~\citep{zhao2024robust}, the prior work most similar to our method (to our knowledge). \citet{zhao2024robust} use RCP to monitor for STL-encoded safety violations, under \textit{permissible} distribution shift. Crucially, RCP assumes that the distribution shift is within an $\epsilon$-bounded statistical distance from the calibration distribution, as measured by the f-divergence. Furthermore, for any desired confidence level $\delta$, this distance $\epsilon$ must satisfy $\epsilon < \delta$ and demands a minimum size offline calibration set that grows larger with $\epsilon$. Assuming these requirements are met, RCP guarantees that
$$
{\rm Prob}\left( \hat{\rho}_t^\varphi - \rho_t^\varphi \leq C_t \right) \geq 1-\delta.
$$
While RCP provides a slightly stronger guarantee than ACP (see Lemma~\ref{lem:lars_adaptive}), the latter can accommodate any distribution shift. Aside from the use of RCP, \citet{zhao2024robust} differs from our approach most significantly in that they do not use incremental learning.

We evaluate by three metrics, precision, recall, and timeliness. The timeliness metric is the number of time steps that an alarm is raised in advance of a safety property violation. In both case studies, the maximum timeliness is our prediction horizon of five steps.

\subsection{Case Study I: F1Tenth Car and Static Obstacles}

\begin{figure*}[!t]
    \centering
    \begin{subfigure}[h]{0.4\textwidth}
        \includegraphics[width=\linewidth]{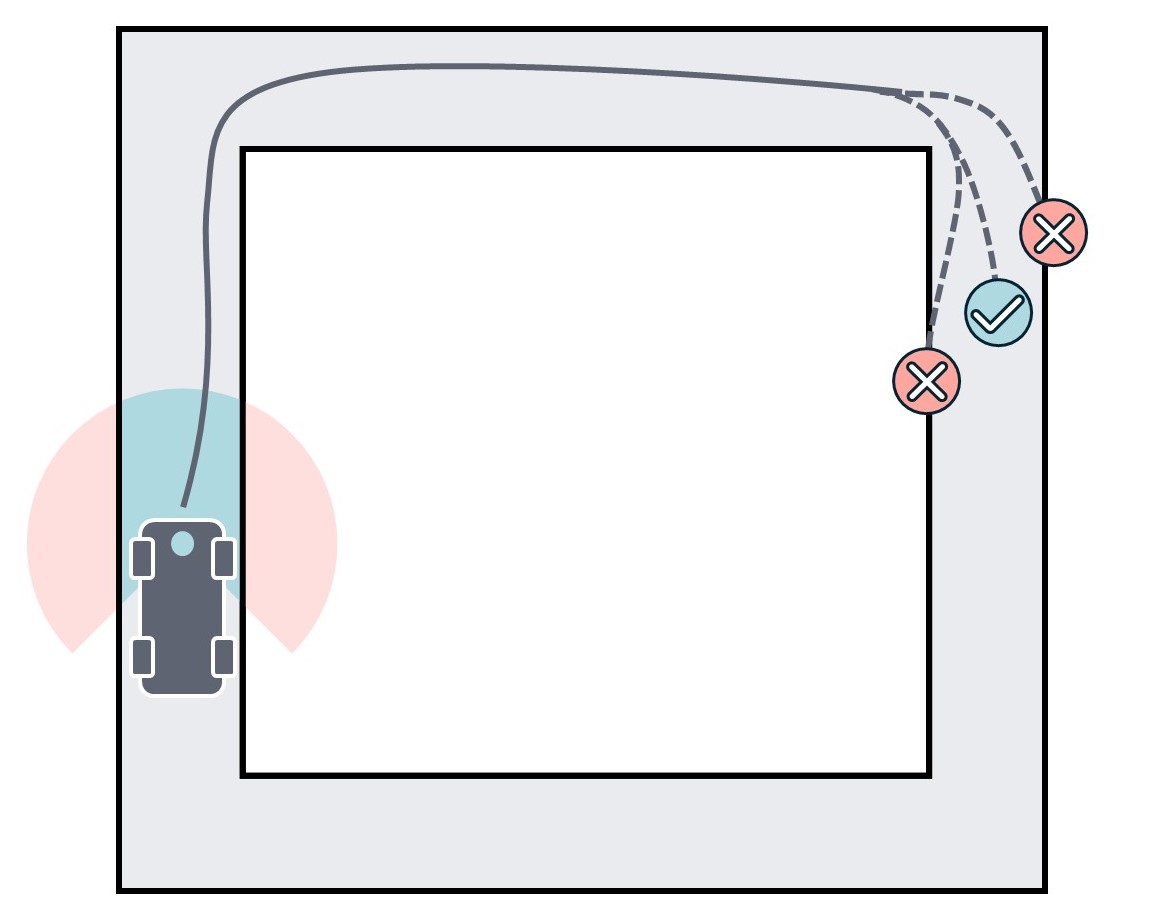}
        \caption{Case Study I. \label{fig:f110}}
    \end{subfigure}
    \hspace{0.1\textwidth}
    \begin{subfigure}[h]{0.4\textwidth}
        \includegraphics[width=\linewidth]{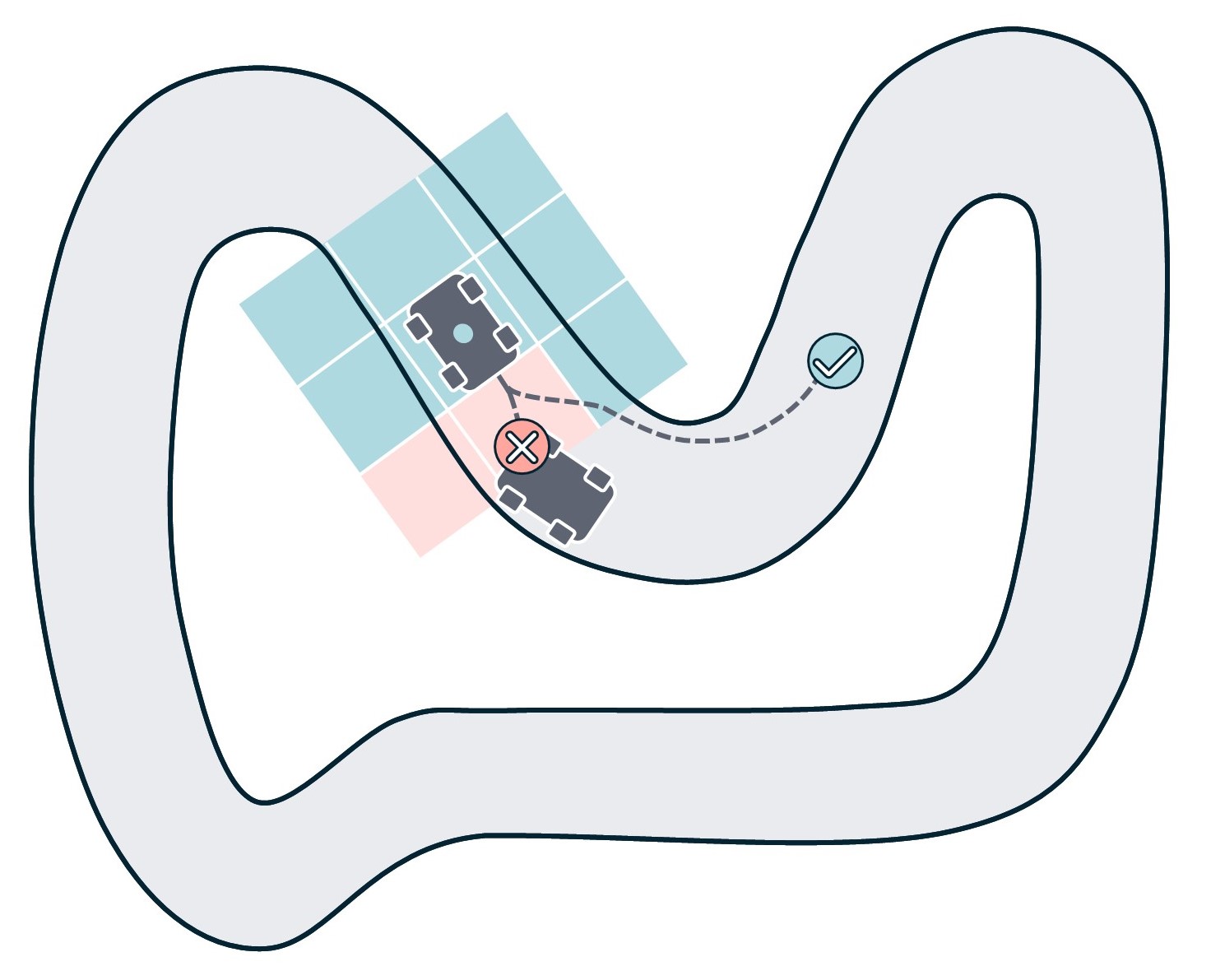}
        \caption{Case Study II. \label{fig:racetrack}}
    \end{subfigure}%
    \caption{Case studies for empirical evaluation. a) A deep RL agent drives an F1Tenth car through series of hallways. Based on LIDAR measurements, the agent must select actions that avoid collisions with the walls. Our safety monitor predicts collisions with the walls. b) A deep RL agent traverses a race track. Based on grids that provide the locations of nearby vehicles and the road surface (the former is shown in this figure), the agent must select actions that avoid collisions with surrounding vehicles and keep the race car on the track. Our safety monitor predicts collisions with other vehicles.}
\end{figure*}

In this case study from \citet{ivanov2020case}, a simulated F1Tenth car must navigate a hallway with four 90-degree right turns forming a square (see Figure~\ref{fig:f110}). The hallway is 1.5 meters in width, with each side 20 meters in length. To complete this task, the controller relies on feedback from 21 LIDAR rays, each with a 5 meter range and together covering $-$135 to 135 degrees relative to the car's heading. The controller determines a continuous space steering command. Constant throttle is assumed. The state of the F1Tenth car consists of its position $(x,y)$, linear velocity $v$, and heading $\theta$:
$$s = \begin{bmatrix}
    x & y & v & \theta
\end{bmatrix}^{\rm T}.$$

\citet{ivanov2020case} train a deep RL-based neural network controller that maps LIDAR measurements to steering commands. We use a deep deterministic policy gradient (DDPG) controller with two layers and 64 neurons per layer from this previous work. The controller was trained using a reward function that discourages collisions and promotes smooth control.

We simulated state trajectories from initial states to collision or maximum time step of 200. We collected 407 in-distribution samples by iterating over initial distance from the walls to the left and front of the car (ranging within $[0.3,1.2)$ in steps of 0.3 meters and $[0,20)$ in steps of 2 meters, respectively) and the heading (ranging within $[-0.45, 0.45)$ in steps of 0.05 radians). We consider four scenarios that lead to OOD LIDAR input to the F1Tenth car controller, impacting the car's safety: 1) three rays are missing from the LIDAR data, 2) five rays are missing from the LIDAR data, 3) uniform noise in $[0, 0.9]$ is added to the LIDAR data, and 4) uniform noise in $[0, 1.0]$ is added to the LIDAR data. For each OOD setting, we collected 200 samples with initial distance from the left and front walls of 0.75 and 9.0 meters, respectively, and intial heading of 0 radians. In both the in-distribution and OOD settings, the collected data excludes trajectories shorter than 25 steps.
\subsection{Case Study II: Race Car and Dynamic Obstacles
}

Here, we consider a reinforcement-learning (RL) based race car environment by~\citet{highway-env}. A race car must drive on a two-lane race track populated with other vehicles (see Figure~\ref{fig:racetrack}). Each lane is five meters wide. The race car controller receives feedback in the form of an occupancy grid and an on-road grid, indicating the presence of a vehicle or the road surface in each grid location, respectively. The grid covers $-$18 to 18 meters in both Cartesian directions, with three meter step. The controller computes a steering command, with constant throttle assumed. The state of the race car consists of its position $(x,y)$, linear velocity $v$, and heading $\theta$:
$$s = \begin{bmatrix}
    x & y & v & \theta
\end{bmatrix}^{\rm T}.$$
We train a deep RL controller by the proximal policy optimization (PPO) algorithm for 10 epochs with a learning rate of $5\times 10^{-4}$. The actor and value networks are two layers each, with 256 neurons per layer. The controller maps grid observations to steering commands. To train the controller, we use a reward function that discourages collisions and promotes staying on the race track.

We simulated state trajectories from 100 randomized initial states to collision or maximum time step of 300 (1500 policy steps), and we exclude those shorter than 25 steps. We repeat this process for one in-distribution and four out-of-distribution scenarios. In our in-distribution (i.e., training) scenario, there is one vehicle on the road in addition to the ego vehicle. We also consider four scenarios that lead to out-of-distribution occupancy grid inputs to the race car controller: up to two, three, four, and five other vehicles on the road in addition to the ego vehicle. In total, we obtain 67 in-distribution samples, 85 two-vehicles samples, 99 three-vehicle samples, 99 four-vehicle samples, and 98 five-vehicle samples.

\subsection{Safety Property}
For both case studies, we encode collision avoidance as our desired safety property and monitor for future violations of this property. Assume that the number and locations of the obstacles are known by a set of $w$ coordinate points, $\{(x^o_1,y^o_1),(x^o_2,y^o_2),\dots,(x^o_w,y^o_w)\}$. In the F1Tenth case study, these points denote the locations of the eight walls (i.e., obstacles). In the race car study, these points are the locations of the surrounding vehicles, which we assume are measured without error. The ego vehicle (i.e., F1Tenth car or race car) must maintain some minimum distance $c$ from each obstacle:
$$
\varphi(s,t) = \square_{[0,H]} \bigwedge_{i=1}^w  d\left(\left(s_{t,0},s_{t,1}\right),\left(x^o_i,y^o_i\right)\right) > c,
$$
where $d$ is the euclidean distance and $H > 0$. The corresponding robustness score is
$$
\rho^\varphi(s,t) = \min_{t'\in[t,t+H]} \min_{i\in[1,w]} d\left(\left(s_{t',0},s_{t',1}\right),\left(x^o_i,y^o_i\right)\right) - c.
$$
We select a horizon of $H=5$. For the F1Tenth case study, we select a safety threshold of $c=0.3$ meters. For the race car, we choose $c = 5.4$ meters. These safety thresholds were chosen to match those used to encode collision avoidance in the RL reward functions. However, any STL-encoded property can be monitored using our proposed approach. For example, using the \textit{until} operator, one can monitor that the ego vehicle maintains a slow speed until it is sufficiently far from an obstacle.

\subsection{Implementation}
To construct our datasets, we allocate 65\%, 15\%, and 20\% of our simulated in-distribution trajectories into a train, validation, and test set, respectively. For our trajectory predictors, we select a history and prediction horizon length of $5$ steps each. We obtain an offline calibration set for CP and RCP by sampling a single history and horizon pair randomly from each validation trajectory, ensuring independence~\citep{codit}.

For our F1Tenth Car case study, we train a feedforward neural network with two layers of 200 neurons each with a mean absolute error (MAE) loss. We train for 55 epochs with learning rate $5\times10^{-4}$. For fine-tuning, we use a weighted MAE loss $\mathcal{L} = \beta\mathcal{L}_S + (1-\beta)\mathcal{L}_C$, where $\mathcal{L}_C$ and $\mathcal{L}_S$ denote the MAE calculated over traces during which a crash does and does not occur, respectively. For the OOD settings where there are 3 and 5 missing LIDAR rays, we fine-tune the predictor for IL with a learning rate of $5\times10^{-4}$ and $\beta$ of 0.2 over 3000 and 4000 epochs, respectively. For the OOD settings with noisy LIDAR rays, we fine-tune with a learning rate of $5\times10^{-5}$ and $\beta$ of 1.0 (because crashes are less common in these settings) over 2000 epochs. For all trainings, we apply a weight decay of $10^{-4}$.

For the Race Car case study, we use AgentFormer~\citep{agentformer}, a transformer variant that leverages the attention mechanism to model both the social and temporal aspects of the system. This allows AgentFormer to account for interactions among vehicles when predicting the future states of the system. We train with the AgentFormer loss for 20 epochs with a learning rate of $10^{-4}$. For fine-tuning, we train for 80 epochs with a learning rate of $10^{-5}$.

For collecting high-error traces for fine-tuning during incremental learning, we select error threshold $\tau$ as the 80\% and 50\% quantile of in-distribution errors for the F1Tenth Car and Race Car case studies, respectively. To obtain distribution prototypes for incremental learning, we select the number of clusters by the elbow method and the threshold for selecting clusters by a hyperparameter search.

Finally, for ACP, we choose $\delta=0.1$ and $\gamma=0.005$, following~\citet{acp}. For RCP, we select the largest choice of $\epsilon<\delta$ (with step size 0.01) that satisfies the calibration size requirements for RCP: $\epsilon=0.08$ for the F1Tenth Car and $\epsilon=0.03$ for the Race Car case study. We choose start time $t_0=15$ steps.

\subsection{Results}
\begin{table}[t]
\caption{Recall for both case studies, recorded over 10 trials. Our safety monitor (ACP+IL) outperforms alternatives in nearly all cases. For ID race car simulations, ACP performs comparably to CP.
 \label{tab:recall}}
\centering
     \begin{tabular}{c|ccc|cc|}
     \cline{2-6}
     & \multicolumn{3}{c|}{\textbf{Case Study I}} & \multicolumn{2}{c|}{\textbf{Case Study II}} \\
     \hline
     \multicolumn{1}{|c|}{\textbf{Tech.}} & \textbf{ID} & \textbf{5 rays} & \textbf{noise (1.0)} & \textbf{ID} & \textbf{5 obs.}\\
     \hline
     \multicolumn{1}{|c|}{PP} & 0.40 $\pm$ 0.33 & 0.48 $\pm$ 0.06 & 0.79 $\pm$ 0.11 & 0.93 $\pm$ 0.04 & 0.82 $\pm$ 0.06 \\
     \hline
     \multicolumn{1}{|c|}{CP} & 0.66 $\pm$ 0.20 & 0.56 $\pm$ 0.05 & 0.90 $\pm$ 0.08 & \textbf{0.96 $\pm$ 0.03} & 0.90 $\pm$ 0.07 \\
     \hline
     \multicolumn{1}{|c|}{RCP} & 0.50 $\pm$ 0.45 & 0.45 $\pm$ 0.14 & 0.63 $\pm$ 0.25 & 0.86 $\pm$ 0.15 & 0.77 $\pm$ 0.12 \\
     \hline
     \multicolumn{1}{|c|}{ACP} & \textbf{0.90 $\pm$ 0.15} & 0.56 $\pm$ 0.05 & 0.96 $\pm$ 0.04 & \textbf{0.96 $\pm$ 0.03} & 0.97 $\pm$ 0.01 \\
     \hline
     \multicolumn{1}{|c|}{ACP+IL} & - & \textbf{0.94 $\pm$ 0.02} & \textbf{0.98 $\pm$ 0.02} & - & \textbf{0.98 $\pm$ 0.01} \\
     \hline
     \end{tabular}
\end{table}

\begin{table}[t]
\caption{Timeliness for both case studies, recorded over 10 trials. Our safety monitor (ACP+IL) outperforms alternatives in nearly all cases. For ID race car simulations, CP achieves higher timeliness than ACP by a narrow margin.
 \label{tab:timeliness}}
\centering
     \begin{tabular}{c|ccc|cc|}
     \cline{2-6}
     & \multicolumn{3}{c|}{\textbf{Case Study I}} & \multicolumn{2}{c|}{\textbf{Case Study II}} \\
     \hline
     \multicolumn{1}{|c|}{\textbf{Tech.}} & \textbf{ID} & \textbf{5 rays} & \textbf{noise (1.0)} & \textbf{ID} & \textbf{5 obs.}\\
     \hline
     \multicolumn{1}{|c|}{PP} & 2.75 $\pm$ 1.48 & 2.91 $\pm$ 0.22 & 4.02 $\pm$ 0.51 & 4.84 $\pm$ 0.11 & 4.16 $\pm$ 0.29 \\
     \hline
     \multicolumn{1}{|c|}{CP} & 3.30 $\pm$ 0.98 & 3.13 $\pm$ 0.19 & 4.49 $\pm$ 0.41 & \textbf{4.89 $\pm$ 0.12} & 4.52 $\pm$ 0.32 \\
     \hline
     \multicolumn{1}{|c|}{RCP} & 4.17 $\pm$ 1.18 & 2.83 $\pm$ 0.56 & 3.61 $\pm$ 0.87 & 4.53 $\pm$ 0.67 & 4.08 $\pm$ 0.44 \\
     \hline
     \multicolumn{1}{|c|}{ACP} & \textbf{4.50 $\pm$ 0.77} & 3.11 $\pm$ 0.18 & 4.80 $\pm$ 0.19 & 4.87 $\pm$ 0.11 & 4.88 $\pm$ 0.06 \\
     \hline
     \multicolumn{1}{|c|}{ACP+IL} & - & \textbf{4.75 $\pm$ 0.07} & \textbf{4.89 $\pm$ 0.09} & - & \textbf{4.93 $\pm$ 0.05} \\
     \hline
     \end{tabular}
\end{table}

\begin{table}[t!]
\caption{Precision for both case studies, recorded over 10 trials. IL reduces the cost in precision incurred by ACP.
 \label{tab:precision}}
\centering
     \begin{tabular}{c|ccc|cc|}
     \cline{2-6}
     & \multicolumn{3}{c|}{\textbf{Case Study I}} & \multicolumn{2}{c|}{\textbf{Case Study II}} \\
     \hline
     \multicolumn{1}{|c|}{\textbf{Tech.}} & \textbf{ID} & \textbf{5 rays} & \textbf{noise (1.0)} & \textbf{ID} & \textbf{5 obs.}\\
     \hline
     \multicolumn{1}{|c|}{PP} & \textbf{0.93 $\pm$ 0.12} & \textbf{1.00 $\pm$ 0.01} & \textbf{0.97 $\pm$ 0.04} & \textbf{0.82 $\pm$ 0.07} & \textbf{0.71 $\pm$ 0.04} \\
     \hline
     \multicolumn{1}{|c|}{CP} & 0.68 $\pm$ 0.26 & 0.85 $\pm$ 0.22 & 0.79 $\pm$ 0.16 & 0.74 $\pm$ 0.06 & 0.67 $\pm$ 0.05 \\
     \hline
     \multicolumn{1}{|c|}{RCP} & 0.72 $\pm$ 0.19 & 0.87 $\pm$ 0.29 & 0.87 $\pm$ 0.28 & \textbf{0.82 $\pm$ 0.09} & \textbf{0.71 $\pm$ 0.05} \\
     \hline
     \multicolumn{1}{|c|}{ACP} & 0.56 $\pm$ 0.12 & 0.92 $\pm$ 0.15 & 0.61 $\pm$ 0.13 & 0.75 $\pm$ 0.04 & 0.54 $\pm$ 0.05 \\
     \hline
     \multicolumn{1}{|c|}{ACP+IL} & - & 0.76 $\pm$ 0.10 & 0.79 $\pm$ 0.08 & - & 0.59 $\pm$ 0.06 \\
     \hline
     \end{tabular}
\end{table}
Tables~\ref{tab:recall},~\ref{tab:timeliness}, and~\ref{tab:precision} report respectively the recall, timeliness, and precision of our safety monitoring technique (ACP + IL) for both case studies. For each metric, we report mean and standard deviation over 10 trials. We also report these metrics for our baseline~\citep{zhao2024robust} (RCP) and variants where only point prediction (PP), conformal prediction (CP), and adaptive conformal prediction (ACP) are used. For brevity, we include for each case study only the in-distribution (ID) setting and the most severe of each type of OOD setting. The results for all OOD settings follow similar trends, as shown in Appendix~\ref{app:full_results}.

\textbf{Adaptive conformal prediction and incremental learning achieve competitive recall and timeliness.} Overall, our method either outperforms or performs comparably to alternatives in terms of recall and timeliness (see Tables~\ref{tab:recall} and~\ref{tab:timeliness}). In all settings, ACP alone achieves competitive performance compared to point, conformal, and robust conformal prediction. Additionally, ACP almost always maintains high recall and timeliness even for OOD data. In OOD cases where ACP alone does not recover high recall and timeliness (e.g., five missing rays), IL closes this gap. Furthermore, across the board, the addition of IL boosts these two metrics. In ablation studies (see Appendix~\ref{app:full_results}), we found that IL also improves the recall and timeliness of PP, CP, and RCP, although our method maintains strongest performance. We discuss our IL technique, including its evasion of catastrophic forgetting, in more detail in Appendix~\ref{app:forgetting}.

\textbf{Incremental learning improves precision.} Lowered precision is a natural consequence of uncertainty quantification, as the technique leads to more conservative predictions based on prediction regions. Indeed, ACP trades off precision for recall in many of our settings. While high recall is essential for safety-critical systems, a low precision is also undesirable, as it indicates excessive false alarms. Overall, we find that our use of IL with ACP recovers a degree of the lost precision (see Table~\ref{tab:precision}). Hence, the combination of IL with ACP is essential to balance the recall-precision trade-off. The only exception is when the distribution of the OOD non-conformity scores is thin-tailed with a mean close to that in the ID case (e.g., five missing rays). We study this limitation further in Appendix~\ref{app:ncs_dist}.

\textbf{Adaptive conformal prediction obtains theoretical guarantees.} Among the uncertainty quantification methods we evaluate, only adaptive conformal prediction obtains probabilistic guarantees for \textit{any} distribution shift. In Figure~\ref{fig:acp_example}, we show the empirical coverage rate and STL-satisfaction rate of ACP in both case studies, confirming Lemma~\ref{lem:lars_adaptive} and Theorem~\ref{thm:satisfy_safety}. As dictated by Lemma~\ref{lem:lars_adaptive}, the empirical coverage rate remains within the theoretical envelope, which narrows towards the targe $1-\delta$ coverage as time progresses. Additionally, the empirical satisfaction rate remains above the lower bound of Theorem~\ref{thm:satisfy_safety} and approaches 1.

\textbf{Conformal prediction and robust conformal prediction fail to obtain theoretical guarantees.} In contrast to ACP, the assumptions necessary to obtain guarantees with CP cannot be satisfied for the time-series and out-of-distribution data we consider. Similarly, although RCP can allow for guarantees, the distribution shift in our settings far exceeds the amount permitted by the technique. In Appendix~\ref{app:emp_cov}, we estimate the total variation distance between the calibration and inference non-conformity scores, following the method outlined in~\citet{zhao2024robust}. The estimated distances exceed our choices of $\epsilon$ in all settings, leading to empirical coverage below the target $1-\delta$. We emphasize that these distances even exceed $\delta=0.1$, making RCP impossible to apply unless we lower our desired coverage guarantee $1-\delta$.

\begin{figure}[!t] 
    \centering
    \includegraphics[width=0.7\columnwidth]{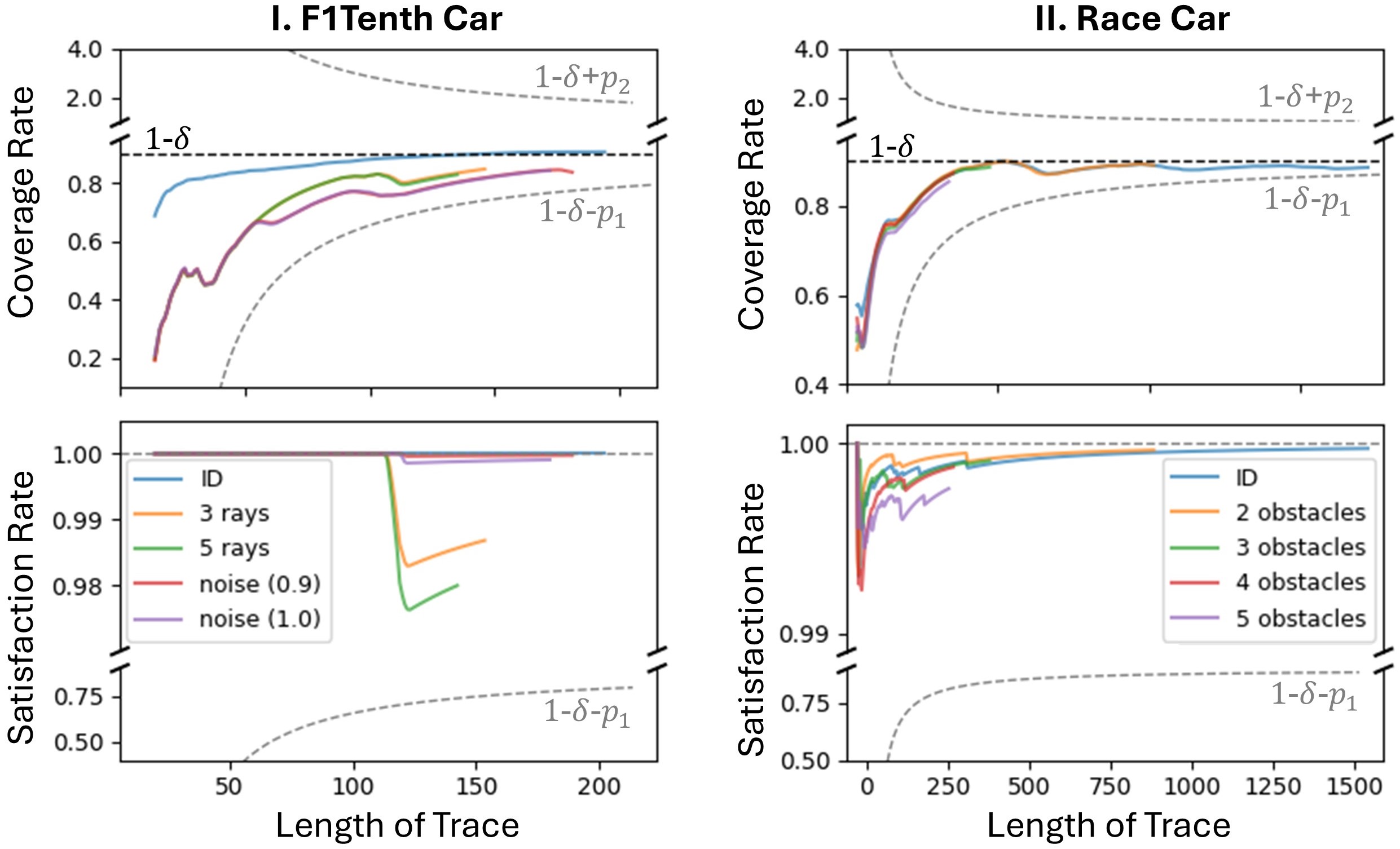}
    \caption{Empirical evaluations of Lemma~\ref{lem:lars_adaptive} and Theorem~\ref{thm:satisfy_safety} for both case studies (without IL). For more accurate estimates, values are calculated over the union of all 10 trials. Lemma~\ref{lem:lars_adaptive} (top): the empirical ACP coverage rates are within the theoretical bounds in the ID and OOD scenarios. Theorem~\ref{thm:satisfy_safety} (bottom): for ID and OOD simulations where the assumptions of Theorem~\ref{thm:satisfy_safety} hold, the empirical STL satisfaction rates are within the theoretical bounds. Sudden drops occur at times when the system reaches a region that is challenging to safely navigate (e.g., sharp corners).
    \label{fig:acp_example}}
\end{figure}

\section{Conclusions} \label{sec:conc}
In this paper, we presented a method to monitor the safety of learning-enabled cyber-physical systems, which can be vulnerable to out-of-distribution scenarios. We demonstrated that this direct safety monitoring is desirable over the OOD detection approach used in many existing literature, as OOD inputs may not necessarily lead to safety violations. We employed a combination of adaptive conformal prediction and incremental learning to obtain probabilistic guarantees even on OOD system state trajectories, while limiting hyper-conservatism. Our empirical results demonstrated that combining these two methods is instrumental to our safety monitor. Adaptive conformal prediction obtains theoretical guarantees under \textit{any} amount of distribution shift, while conformal and robust conformal prediction cannot. Additionally, the use of both ACP and IL drastically increases the recall and timeliness of our method in OOD settings, while reducing the cost in precision.

A variety of extensions present interesting avenues for future work. In particular, the efficacy of our approach is closely linked to the fine-tuning of the trajectory predictor. Alternative methods for selecting fine-tuning data at runtime may allow larger and richer datasets to be collected, improving the precision of our safety monitor. Fine-tuning the trajectory predictor can also be computationally expensive. Exploring smaller-scale models may help to address this bottleneck. Finally, our safety monitor acts in online settings for learning-enabled systems trained offline, but the method is also applicable to those trained online. Evaluation in this latter setting may provide valuable insights into the resilience of our method to novel situations, as models trained online continually influence the distribution of the LE-CPS states. These extensions may further improve our safety monitor for learning-enabled cyber-physical systems in out-of-distributions scenarios.

\section*{Acknowledgements}
This work was supported in part by ARO MURI W911NF-20-1-0080, NSF 2143274, NSF 2403758, NSF 2231257, and a gift from AWS AI to Penn Engineering's ASSET Center for Trustworthy AI. This work was also supported in part by the U.S. Air Force and DARPA under Contract No. FA8750-23-C-0519 and the U.S. Army Research Laboratory Cooperative Research Agreement W911NF-17-2-0196. Any opinions, findings, conclusions or recommendations expressed in this material are those of the authors and do not necessarily reflect the views the Army Research Office (ARO), the Department of Defense, or the United States Government.

\bibliographystyle{plainnat}
\bibliography{ref}

\begin{thebibliography}{50}
\providecommand{\natexlab}[1]{#1}
\providecommand{\url}[1]{\texttt{#1}}
\expandafter\ifx\csname urlstyle\endcsname\relax
  \providecommand{\doi}[1]{doi: #1}\else
  \providecommand{\doi}{doi: \begingroup \urlstyle{rm}\Url}\fi

\bibitem[Ahmed and Courville(2020)]{semantic}
Faruk Ahmed and Aaron Courville.
\newblock Detecting semantic anomalies.
\newblock In \emph{Proceedings of the AAAI Conference on Artificial Intelligence}, volume~34, pages 3154--3162, 2020.

\bibitem[Cai and Koutsoukos(2020)]{cai2020real}
Feiyang Cai and Xenofon Koutsoukos.
\newblock Real-time out-of-distribution detection in learning-enabled cyber-physical systems.
\newblock In \emph{2020 ACM/IEEE 11th International Conference on Cyber-Physical Systems (ICCPS)}, pages 174--183. IEEE, 2020.

\bibitem[Dixit et~al.(2023)Dixit, Lindemann, Wei, Cleaveland, Pappas, and Burdick]{lars_adaptive}
Anushri Dixit, Lars Lindemann, Skylar~X Wei, Matthew Cleaveland, George~J Pappas, and Joel~W Burdick.
\newblock Adaptive conformal prediction for motion planning among dynamic agents.
\newblock In \emph{Learning for Dynamics and Control Conference}, pages 300--314. PMLR, 2023.

\bibitem[Donz{\'e} and Maler(2010)]{stl}
Alexandre Donz{\'e} and Oded Maler.
\newblock Robust satisfaction of temporal logic over real-valued signals.
\newblock In \emph{International Conference on Formal Modeling and Analysis of Timed Systems}, pages 92--106. Springer, 2010.

\bibitem[Dreossi et~al.(2019)Dreossi, Donz{\'e}, and Seshia]{falsification}
Tommaso Dreossi, Alexandre Donz{\'e}, and Sanjit~A Seshia.
\newblock Compositional falsification of cyber-physical systems with machine learning components.
\newblock \emph{Journal of Automated Reasoning}, 63:\penalty0 1031--1053, 2019.

\bibitem[Feng et~al.(2021)Feng, Ng, and Easwaran]{arvind}
Yeli Feng, Daniel Jun~Xian Ng, and Arvind Easwaran.
\newblock Improving variational autoencoder based out-of-distribution detection for embedded real-time applications.
\newblock \emph{ACM Transactions on Embedded Computing Systems (TECS)}, 20\penalty0 (5s):\penalty0 1--26, 2021.

\bibitem[Gibbs and Candes(2021)]{acp}
Isaac Gibbs and Emmanuel Candes.
\newblock Adaptive conformal inference under distribution shift.
\newblock \emph{Advances in Neural Information Processing Systems}, 34:\penalty0 1660--1672, 2021.

\bibitem[He et~al.(2020)He, Mao, Shao, and Zhu]{he2020incremental}
Jiangpeng He, Runyu Mao, Zeman Shao, and Fengqing Zhu.
\newblock Incremental learning in online scenario.
\newblock In \emph{Proceedings of the IEEE/CVF conference on computer vision and pattern recognition}, pages 13926--13935, 2020.

\bibitem[Hendrycks and Dietterich(2019)]{hendrycks2019benchmarking}
Dan Hendrycks and Thomas Dietterich.
\newblock Benchmarking neural network robustness to common corruptions and perturbations.
\newblock \emph{arXiv preprint arXiv:1903.12261}, 2019.

\bibitem[Hendrycks and Gimpel(2016)]{hendrycks2016baseline}
Dan Hendrycks and Kevin Gimpel.
\newblock A baseline for detecting misclassified and out-of-distribution examples in neural networks.
\newblock \emph{arXiv preprint arXiv:1610.02136}, 2016.

\bibitem[Hendrycks et~al.(2019)Hendrycks, Mazeika, Kadavath, and Song]{aux}
Dan Hendrycks, Mantas Mazeika, Saurav Kadavath, and Dawn Song.
\newblock Using self-supervised learning can improve model robustness and uncertainty.
\newblock In \emph{Advances in Neural Information Processing Systems}, pages 15663--15674, 2019.

\bibitem[Ivanov et~al.(2020)Ivanov, Carpenter, Weimer, Alur, Pappas, and Lee]{ivanov2020case}
Radoslav Ivanov, Taylor~J Carpenter, James Weimer, Rajeev Alur, George~J Pappas, and Insup Lee.
\newblock Case study: verifying the safety of an autonomous racing car with a neural network controller.
\newblock In \emph{Proceedings of the 23rd International Conference on Hybrid Systems: Computation and Control}, pages 1--7, 2020.

\bibitem[Kantaros et~al.(2021)Kantaros, Carpenter, Sridhar, Yang, Lee, and Weimer]{k2021real}
Yiannis Kantaros, Taylor Carpenter, Kaustubh Sridhar, Yahan Yang, Insup Lee, and James Weimer.
\newblock Real-time detectors for digital and physical adversarial inputs to perception systems.
\newblock In \emph{Proceedings of the ACM/IEEE 12th International Conference on Cyber-Physical Systems}, pages 67--76, 2021.

\bibitem[Kaur et~al.(2021)Kaur, Jha, Roy, Park, Sokolsky, and Lee]{kaur}
Ramneet Kaur, Susmit Jha, Anirban Roy, Sangdon Park, Oleg Sokolsky, and Insup Lee.
\newblock Detecting oods as datapoints with high uncertainty.
\newblock \emph{arXiv preprint arXiv:2108.06380}, 2021.

\bibitem[Kaur et~al.(2022)Kaur, Jha, Roy, Park, Dobriban, Sokolsky, and Lee]{iDECODe}
Ramneet Kaur, Susmit Jha, Anirban Roy, Sangdon Park, Edgar Dobriban, Oleg Sokolsky, and Insup Lee.
\newblock i{DECOD}e: In-distribution {E}quivariance for {C}onformal {O}ut-of-distribution {D}etection, {A}ssociation for the {A}dvancement of {A}rtificial {I}ntelligence, 2022.

\bibitem[Kaur et~al.(2023{\natexlab{a}})Kaur, Ji, Dutta, Caprio, Yang, Bernardis, Sokolsky, and Lee]{semantic_ood_detection}
Ramneet Kaur, Xiayan Ji, Souradeep Dutta, Michele Caprio, Yahan Yang, Elena Bernardis, Oleg Sokolsky, and Insup Lee.
\newblock Using semantic information for defining and detecting ood inputs.
\newblock \emph{arXiv preprint arXiv:2302.11019}, 2023{\natexlab{a}}.

\bibitem[Kaur et~al.(2023{\natexlab{b}})Kaur, Kantaros, Si, Weimer, and Lee]{time_series_adv}
Ramneet Kaur, Yiannis Kantaros, Wenwen Si, James Weimer, and Insup Lee.
\newblock Detection of adversarial physical attacks in time-series image data.
\newblock \emph{arXiv preprint arXiv:2304.13919}, 2023{\natexlab{b}}.

\bibitem[Kaur et~al.(2023{\natexlab{c}})Kaur, Sridhar, Park, Yang, Jha, Roy, Sokolsky, and Lee]{codit}
Ramneet Kaur, Kaustubh Sridhar, Sangdon Park, Yahan Yang, Susmit Jha, Anirban Roy, Oleg Sokolsky, and Insup Lee.
\newblock {COD}i{T}: Conformal {O}ut-of-{D}istribution {D}etection in {T}ime-{S}eries {D}ata.
\newblock In \emph{Proceedings of the ACM/IEEE 14th International Conference on Cyber-Physical Systems (with CPS-IoT Week 2023)}, pages 120--131, 2023{\natexlab{c}}.

\bibitem[Kaur et~al.(2024)Kaur, Yang, Sokolsky, and Lee]{kaur2024out}
Ramneet Kaur, Yahan Yang, Oleg Sokolsky, and Insup Lee.
\newblock Out-of-distribution detection in dependent data for cyber-physical systems with conformal guarantees.
\newblock \emph{ACM Transactions on Cyber-Physical Systems}, 2024.

\bibitem[Lample and Chaplot(2017)]{atari}
Guillaume Lample and Devendra~Singh Chaplot.
\newblock Playing fps games with deep reinforcement learning.
\newblock In \emph{Proceedings of the AAAI Conference on Artificial Intelligence}, volume~31, 2017.

\bibitem[Leurent(2018)]{highway-env}
Edouard Leurent.
\newblock An environment for autonomous driving decision-making.
\newblock \url{https://github.com/eleurent/highway-env}, 2018.

\bibitem[Lindemann et~al.(2023)Lindemann, Qin, Deshmukh, and Pappas]{lars_iccps}
Lars Lindemann, Xin Qin, Jyotirmoy~V Deshmukh, and George~J Pappas.
\newblock Conformal prediction for stl runtime verification.
\newblock In \emph{Proceedings of the ACM/IEEE 14th International Conference on Cyber-Physical Systems (with CPS-IoT Week 2023)}, pages 142--153, 2023.

\bibitem[M et~al.(2022)M, Khullar, Bhosle, Salunke, Bangare, and Ingavale]{cps-incremental}
Veeramanickam~M.R. M, Vikas Khullar, Amol~A Bhosle, Mangesh~D. Salunke, Jyoti~L. Bangare, and Aniket Ingavale.
\newblock Streamed incremental learning for cyber attack classification using machine learning.
\newblock In \emph{2022 2nd International Conference on Innovative Sustainable Computational Technologies (CISCT)}, pages 1--5, 2022.
\newblock \doi{10.1109/CISCT55310.2022.10046651}.

\bibitem[Mac{\^e}do et~al.(2021)Mac{\^e}do, Ren, Zanchettin, Oliveira, and Ludermir]{unsuper_2}
David Mac{\^e}do, Tsang~Ing Ren, Cleber Zanchettin, Adriano~LI Oliveira, and Teresa Ludermir.
\newblock Entropic out-of-distribution detection.
\newblock In \emph{2021 International Joint Conference on Neural Networks (IJCNN)}, pages 1--8. IEEE, 2021.

\bibitem[Maler and Nickovic(2004)]{maler2004monitoring}
Oded Maler and Dejan Nickovic.
\newblock Monitoring temporal properties of continuous signals.
\newblock In \emph{International symposium on formal techniques in real-time and fault-tolerant systems}, pages 152--166. Springer, 2004.

\bibitem[McCloskey and Cohen(1989)]{mccloskey1989catastrophic}
Michael McCloskey and Neal~J Cohen.
\newblock Catastrophic interference in connectionist networks: The sequential learning problem.
\newblock In \emph{Psychology of learning and motivation}, volume~24, pages 109--165. Elsevier, 1989.

\bibitem[Mohammed and Valdenegro-Toro(2021)]{mohammed2021benchmark}
Aaqib~Parvez Mohammed and Matias Valdenegro-Toro.
\newblock Benchmark for out-of-distribution detection in deep reinforcement learning.
\newblock \emph{arXiv preprint arXiv:2112.02694}, 2021.

\bibitem[Papadopoulos et~al.(2002)Papadopoulos, Proedrou, Vovk, and Gammerman]{papadopoulos2002inductive}
Harris Papadopoulos, Kostas Proedrou, Volodya Vovk, and Alex Gammerman.
\newblock Inductive confidence machines for regression.
\newblock In \emph{Machine learning: ECML 2002: 13th European conference on machine learning Helsinki, Finland, August 19--23, 2002 proceedings 13}, pages 345--356. Springer, 2002.

\bibitem[Parikh and Polikar(2007)]{ensemble-incremental}
Devi Parikh and Robi Polikar.
\newblock An ensemble-based incremental learning approach to data fusion.
\newblock \emph{IEEE Transactions on Systems, Man, and Cybernetics, Part B (Cybernetics)}, 37\penalty0 (2):\penalty0 437--450, 2007.
\newblock \doi{10.1109/TSMCB.2006.883873}.

\bibitem[Ramakrishna et~al.(2022)Ramakrishna, Rahiminasab, Karsai, Easwaran, and Dubey]{ramakrishna2022efficient}
Shreyas Ramakrishna, Zahra Rahiminasab, Gabor Karsai, Arvind Easwaran, and Abhishek Dubey.
\newblock Efficient out-of-distribution detection using latent space of $\beta$-vae for cyber-physical systems.
\newblock \emph{ACM Transactions on Cyber-Physical Systems (TCPS)}, 6\penalty0 (2):\penalty0 1--34, 2022.

\bibitem[Rebuffi et~al.(2017)Rebuffi, Kolesnikov, Sperl, and Lampert]{rebuffi2017icarl}
Sylvestre-Alvise Rebuffi, Alexander Kolesnikov, Georg Sperl, and Christoph~H Lampert.
\newblock icarl: Incremental classifier and representation learning.
\newblock In \emph{Proceedings of the IEEE conference on Computer Vision and Pattern Recognition}, pages 2001--2010, 2017.

\bibitem[Recht et~al.(2019)Recht, Roelofs, Schmidt, and Shankar]{recht2019imagenet}
Benjamin Recht, Rebecca Roelofs, Ludwig Schmidt, and Vaishaal Shankar.
\newblock Do imagenet classifiers generalize to imagenet?
\newblock In \emph{International conference on machine learning}, pages 5389--5400. PMLR, 2019.

\bibitem[Reis et~al.(2020)Reis, Murillo~Piedrahita, Rueda, Fernandes, Medeiros, de~Amorim, and Mattos]{reis2020unsupervised-incremental}
L{\'u}cio Henrik~A Reis, Andres Murillo~Piedrahita, Sandra Rueda, Nat{\'a}lia~C Fernandes, Dianne~SV Medeiros, Marcelo~Dias de~Amorim, and Diogo~MF Mattos.
\newblock Unsupervised and incremental learning orchestration for cyber-physical security.
\newblock \emph{Transactions on emerging telecommunications technologies}, 31\penalty0 (7):\penalty0 e4011, 2020.

\bibitem[Saunders et~al.(1999)Saunders, Gammerman, and Vovk]{saunders1999transduction}
Craig Saunders, Alex Gammerman, and Volodya Vovk.
\newblock Transduction with confidence and credibility.
\newblock 1999.

\bibitem[Siddiqui()]{tesla}
Faiz Siddiqui.
\newblock Tesla is putting 'self-driving' in the hands of drivers amid criticism the tech is not ready.
\newblock \emph{The Washington Post}.
\newblock URL \url{https://www.washingtonpost.com/technology/2020/10/21/tesla-self-driving/}.

\bibitem[Sridhar et~al.(2022)Sridhar, Dutta, Kaur, Weimer, Sokolsky, and Lee]{sridhar2022towards}
Kaustubh Sridhar, Souradeep Dutta, Ramneet Kaur, James Weimer, Oleg Sokolsky, and Insup Lee.
\newblock Towards alternative techniques for improving adversarial robustness: Anaflysis of adversarial training at a spectrum of perturbations.
\newblock \emph{arXiv preprint arXiv:2206.06496}, 2022.

\bibitem[Sundar et~al.(2020)Sundar, Ramakrishna, Rahiminasab, Easwaran, and Dubey]{beta-vae-2}
Vijaya~Kumar Sundar, Shreyas Ramakrishna, Zahra Rahiminasab, Arvind Easwaran, and Abhishek Dubey.
\newblock Out-of-distribution detection in multi-label datasets using latent space of $\beta$-vae.
\newblock In \emph{2020 IEEE Security and Privacy Workshops (SPW)}, pages 250--255. IEEE, 2020.

\bibitem[Tack et~al.(2020)Tack, Mo, Jeong, and Shin]{csi}
Jihoon Tack, Sangwoo Mo, Jongheon Jeong, and Jinwoo Shin.
\newblock Csi: Novelty detection via contrastive learning on distributionally shifted instances.
\newblock \emph{Advances in Neural Information Processing Systems}, 33, 2020.

\bibitem[Tan()]{waymo}
Eli Tan.
\newblock Waymo's robot taxis are almost mainstream. can they now turn a profit?
\newblock \emph{The New York Times}.
\newblock URL \url{https://www.nytimes.com/2024/09/04/technology/waymo-expansion-alphabet.html}.

\bibitem[Taylor et~al.(2022)Taylor, Kardas, Cucurull, Scialom, Hartshorn, Saravia, Poulton, Kerkez, and Stojnic]{galactica}
Ross Taylor, Marcin Kardas, Guillem Cucurull, Thomas Scialom, Anthony Hartshorn, Elvis Saravia, Andrew Poulton, Viktor Kerkez, and Robert Stojnic.
\newblock Galactica: A large language model for science.
\newblock \emph{arXiv preprint arXiv:2211.09085}, 2022.

\bibitem[Tiwari et~al.(2014)Tiwari, Dutertre, Jovanovi{\'c}, de~Candia, Lincoln, Rushby, Sadigh, and Seshia]{tiwari2014safety}
Ashish Tiwari, Bruno Dutertre, Dejan Jovanovi{\'c}, Thomas de~Candia, Patrick~D Lincoln, John Rushby, Dorsa Sadigh, and Sanjit Seshia.
\newblock Safety envelope for security.
\newblock In \emph{Proceedings of the 3rd international conference on High confidence networked systems}, pages 85--94, 2014.

\bibitem[Vishwakarma et~al.(2024)Vishwakarma, Lin, and Vinayak]{vishwakarma2024taming}
Harit Vishwakarma, Heguang Lin, and Ramya~Korlakai Vinayak.
\newblock Taming false positives in out-of-distribution detection with human feedback.
\newblock \emph{arXiv preprint arXiv:2404.16954}, 2024.

\bibitem[Vovk et~al.(1999)Vovk, Gammerman, and Saunders]{vovk1999machine}
Volodya Vovk, Alexander Gammerman, and Craig Saunders.
\newblock Machine-learning applications of algorithmic randomness.
\newblock 1999.

\bibitem[Xiao et~al.(2014)Xiao, Zhang, Yang, Peng, and Zhang]{incremental_learning_image}
Tianjun Xiao, Jiaxing Zhang, Kuiyuan Yang, Yuxin Peng, and Zheng Zhang.
\newblock Error-driven incremental learning in deep convolutional neural network for large-scale image classification.
\newblock In \emph{Proceedings of the 22nd ACM International Conference on Multimedia}, MM '14, page 177–186, New York, NY, USA, 2014. Association for Computing Machinery.
\newblock ISBN 9781450330633.
\newblock \doi{10.1145/2647868.2654926}.
\newblock URL \url{https://doi.org/10.1145/2647868.2654926}.

\bibitem[Yang et~al.(2022)Yang, Kaur, Dutta, and Lee]{interpretable_ood}
Yahan Yang, Ramneet Kaur, Souradeep Dutta, and Insup Lee.
\newblock Interpretable detection of distribution shifts in learning enabled cyber-physical systems.
\newblock In \emph{2022 ACM/IEEE 13th International Conference on Cyber-Physical Systems (ICCPS)}, pages 225--235. IEEE, 2022.

\bibitem[Yang et~al.(2023)Yang, Dutta, Jang, Sokolsky, and Lee]{yang2023incremental}
Yahan Yang, Souradeep Dutta, Kuk~Jin Jang, Oleg Sokolsky, and Insup Lee.
\newblock Incremental learning with memory regressors for motion prediction in autonomous racing.
\newblock In \emph{Proceedings of the ACM/IEEE 14th International Conference on Cyber-Physical Systems (with CPS-IoT Week 2023)}, pages 264--265, 2023.

\bibitem[Yang et~al.(2024)Yang, Kaur, Dutta, and Lee]{yang2024memory}
Yahan Yang, Ramneet Kaur, Souradeep Dutta, and Insup Lee.
\newblock Memory-based distribution shift detection for learning enabled cyber-physical systems with statistical guarantees.
\newblock \emph{ACM Transactions on Cyber-Physical Systems}, 8\penalty0 (2):\penalty0 1--28, 2024.

\bibitem[Yuan et~al.(2021)Yuan, Weng, Ou, and Kitani]{agentformer}
Ye~Yuan, Xinshuo Weng, Yanglan Ou, and Kris~M Kitani.
\newblock Agentformer: Agent-aware transformers for socio-temporal multi-agent forecasting.
\newblock In \emph{Proceedings of the IEEE/CVF International Conference on Computer Vision}, pages 9813--9823, 2021.

\bibitem[Zhao et~al.(2024)Zhao, Hoxha, Fainekos, Deshmukh, and Lindemann]{zhao2024robust}
Yiqi Zhao, Bardh Hoxha, Georgios Fainekos, Jyotirmoy~V Deshmukh, and Lars Lindemann.
\newblock Robust conformal prediction for stl runtime verification under distribution shift.
\newblock In \emph{2024 ACM/IEEE 15th International Conference on Cyber-Physical Systems (ICCPS)}, pages 169--179. IEEE, 2024.

\bibitem[Zisselman and Tamar(2020)]{super_2}
Ev~Zisselman and Aviv Tamar.
\newblock Deep residual flow for out of distribution detection.
\newblock In \emph{Proceedings of the IEEE/CVF Conference on Computer Vision and Pattern Recognition}, pages 13994--14003, 2020.

\end{thebibliography}

\appendix

\section{Complete Results and Ablations} \label{app:full_results}
Figure~\ref{fig:both_results} reports the recall, precision, and timeliness of our safety monitor (ACP + IL) for both case studies in all OOD settings. We evaluate point prediction (PP), conformal prediction (CP), robust conformal prediction (RCP), and adaptive conformal prediction (ACP) with and without incremental learning (IL). 
\section{Incremental Learning} \label{app:forgetting}
Table~\ref{tab:pred_err} shows the average displacement error, defined in~\citet{agentformer}, of our predictors (Appendix~\ref{app:ncs_dist} discusses the $P(|R_t|>\tau)$ column). On OOD data, error increases. With incremental learning (IL), prediction performance is almost always recovered to in-distribution levels. This is largely because our IL technique mitigates the effects of catastrophic forgetting common in continual learning. For example, consider Case Study I with three missing LIDAR rays. Figure~\ref{fig:f110_3_il} shows an example trace. While the fine-tuned model improves predictions on challenging parts of the course, it "forgets" previously learned knowledge about the rest. A combination of the original and fine-tuned model must be used to compensate.

\begin{figure*}[!t] 
    \centering
    \includegraphics[width=\textwidth]{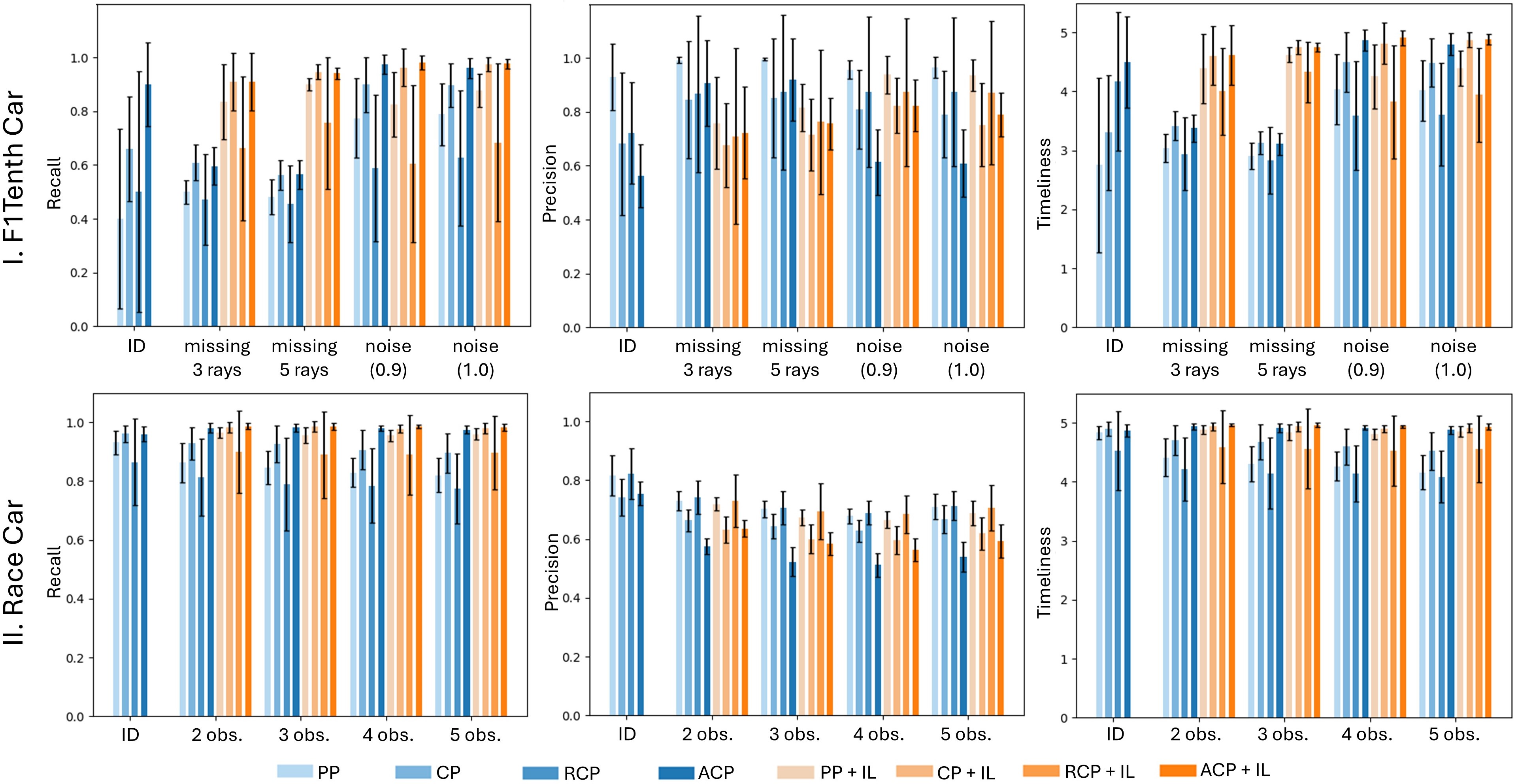}
    \caption{Recall, precision, and timeliness of our safety monitor for Case Studies I (top) and II (bottom), recorded over 10 trials. For Case Study I, the OOD scenarios are 3 missing LIDAR rays, 5 missing LIDAR rays, additive uniform (0,0.9) noise, and additive uniform (0,1.0) noise. For Case Study II, the OOD scenarios are 2, 3, 4, and 5 dynamic obstacles (obs.) on the race track.\label{fig:both_results}}
\end{figure*}
\begin{table}[t]
\caption{Probability of a high-error robustness score prediction and the average displacement error (ADE) of our trajectory predictors with and without incremental learning.\label{tab:pred_err}}
\centering
     \begin{tabular}{|cccc|}
     \hline
     \multicolumn{4}{|c|}{\textbf{Case Study I}}\\
     \hline\hline
     \textbf{Setting} & \textbf{$\bm{P(\left|R_t\right| > \tau)}$} &\textbf{ADE w/o IL} & \textbf{ADE w/ IL}\\
     \hline
     ID & - & 0.052 $\pm$ 0.005 & - \\
     3 rays & 0.31 $\pm$ 0.06 & 0.090 $\pm$ 0.007 & 0.081 $\pm$ 0.007 \\
     5 rays & 0.30 $\pm$ 0.09 & 0.103 $\pm$ 0.008 & 0.081 $\pm$ 0.010 \\
     noise (0.9) & 0.43 $\pm$ 0.07 & 0.082 $\pm$ 0.005 & 0.045 $\pm$ 0.008 \\
     noise (1.0) & 0.44 $\pm$ 0.06 & 0.095 $\pm$ 0.005 & 0.055 $\pm$ 0.007 \\
     \hline
     \multicolumn{4}{c}{}\\
     \hline
     \multicolumn{4}{|c|}{\textbf{Case Study II}}\\
     \hline\hline
     \textbf{Setting} & \textbf{$\bm{P(\left|R_t\right| > \tau)}$} &\textbf{ADE w/o IL} & \textbf{ADE w/ IL}\\
     \hline
     ID & - & 0.332 $\pm$ 0.081 & - \\
     2 obs. & 0.76 $\pm$ 0.07 & 0.482 $\pm$ 0.080 & 0.266 $\pm$ 0.059 \\
     3 obs. & 0.78 $\pm$ 0.07 & 0.576 $\pm$ 0.090 & 0.300 $\pm$ 0.058 \\
     4 obs. & 0.80 $\pm$ 0.05 & 0.610 $\pm$ 0.096 & 0.297 $\pm$ 0.081 \\
     5 obs. & 0.84 $\pm$ 0.05 & 0.634 $\pm$ 0.092 & 0.268 $\pm$ 0.054 \\ 
     \hline
     \end{tabular}
\end{table}

\section{Non-Conformity Score Distributions} \label{app:ncs_dist}
Incremental learning (IL) recovers precision in all settings except when there are missing rays in Case Study I. In these settings, the distribution of the non-conformity scores before IL is thin-tailed, with a mean near the calibration mean (see Figure~\ref{fig:ncs_dist}). Thus, fewer samples exceed the high-error threshold $\tau$ for fine-tuning. Table~\ref{tab:pred_err} shows the estimated probability that the robustness score prediction will exceed $\tau$, along with the average displacement error of the predictors. In Case Study I with three or five missing rays, the probabilities of exceeding $\tau$ are the lowest. Hence, IL does not completely recover prediction quality to the in-distribution level. The ACP framework compensates for error with a more conservative prediction region over the robustness value, decreasing precision.
\begin{figure}[!t] 
    \centering
    \includegraphics[width=0.6\columnwidth]{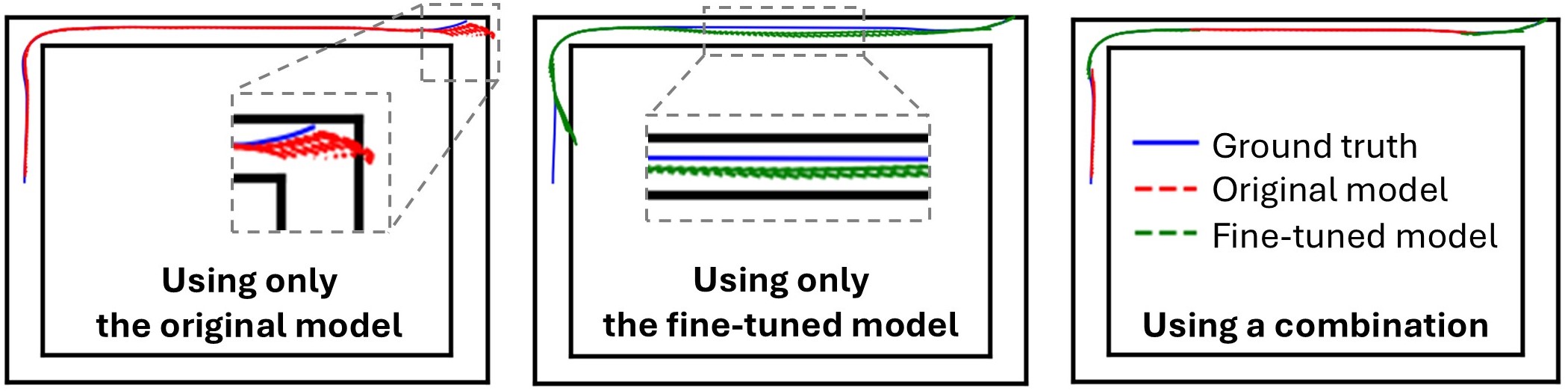}
    \caption{The original model makes poor predictions at the corners. We fine-tune our predictor on these high-error samples. The fine-tuned model learns to make higher quality predictions the corners, but forgets previously learned knowledge about the straight sections. To compensate, our method dynamically selects between the two models.
    \label{fig:f110_3_il}}
\end{figure}

\begin{figure*}[!t] 
    \centering
    \includegraphics[width=\textwidth]{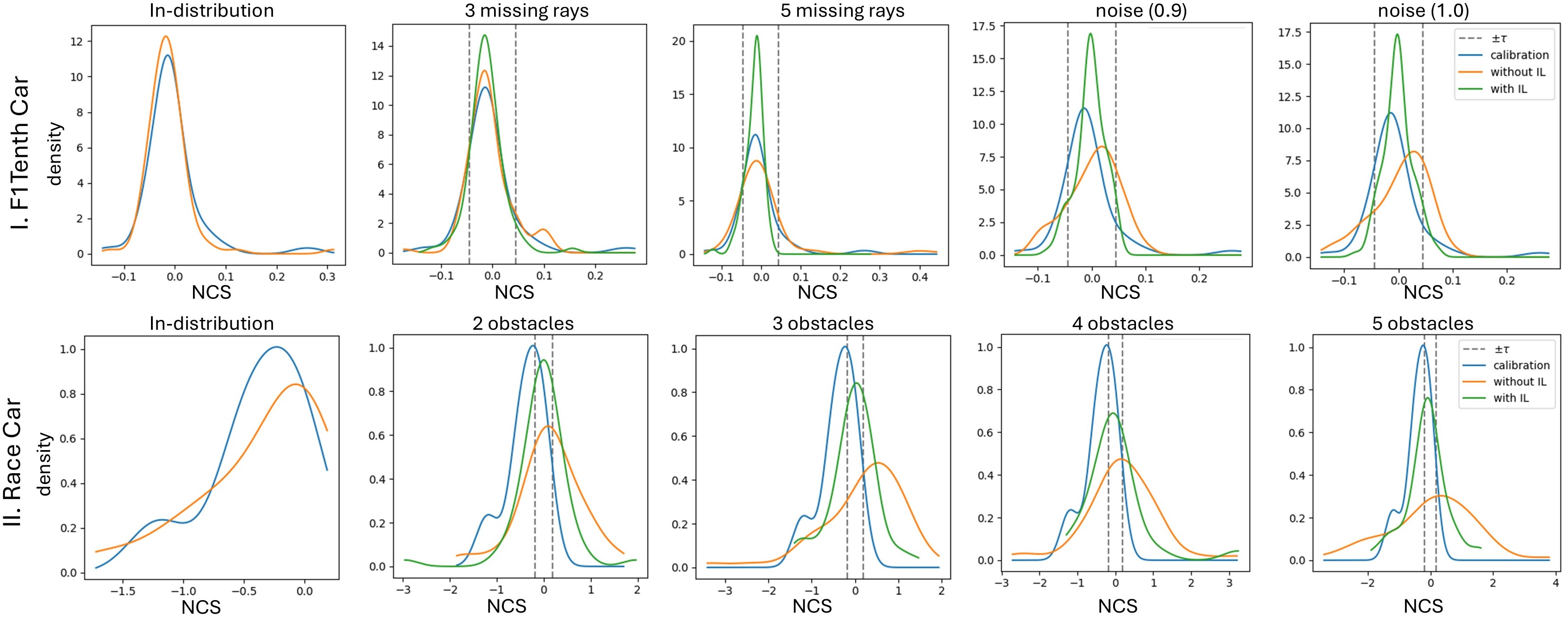}
    \caption{Distributions, estimated via Gaussian KDE, of the calibration non-conformity scores (NCS) and the online NCS with and without incremental learning (IL) for one seed. The high-error threshold $\tau$ for each OOD case is shown in dotted lines.
    \label{fig:ncs_dist}}
\end{figure*}

\begin{table}[ht!]
\caption{Estimated total variation distance between the offline calibration and online non-conformity scores. In Case Study I, $\epsilon=0.08$, and in Case Study II, $\epsilon=0.03$. \label{tab:eps_est}}
\centering
\begin{minipage}[h]{0.47\linewidth}\centering
     \begin{tabular}{|ccc|}
     \hline
     \multicolumn{3}{|c|}{\textbf{Case Study I}}\\
     \hline\hline
     \textbf{Setting} & \textbf{w/o IL} & \textbf{w/ IL}\\ 
     \hline
     ID & 0.142 & - \\
     3 rays & 0.169 & 0.166\\
     5 rays & 0.209 & 0.212 \\
     noise (0.9) & 0.251 & 0.367 \\
     noise (1.0) & 0.243 & 0.321 \\ 
     \hline
     \end{tabular}
\end{minipage}
\begin{minipage}[h]{0.47\linewidth}\centering
     \begin{tabular}{|ccc|}
     \hline
     \multicolumn{3}{|c|}{\textbf{Case Study II}}\\
     \hline\hline
    \textbf{Setting} & \textbf{w/o IL} &\textbf{w/ IL}\\ 
     \hline
     ID & 0.159 & - \\
     2 obs. & 0.308 & 0.190 \\
     3 obs. & 0.354 & 0.212 \\
     4 obs. & 0.378 & 0.220 \\
     5 obs. & 0.454 & 0.222 \\ 
     \hline
 \end{tabular}
\end{minipage}
\end{table}
\section{Theoretical Guarantees} \label{app:emp_cov}
Neither conformal prediction (CP) nor robust conformal prediction (RCP) obtain theoretical guarantees in our case studies. The necessary assumptions are unsatisfied for both. For CP, the data is not exchangeable with the offline calibration set. For RCP, the distribution shift in non-conformity scores (NCS) exceeds the permissible amount $\epsilon$. Figure~\ref{fig:ncs_dist} shows the estimated distributions of these NCS. There is a large difference between the online and calibration distributions. Table~\ref{tab:eps_est} shows the total variation distance between these two distributions, estimated using the method in~\citet{zhao2024robust}. In all cases, the distance is much greater than $\epsilon$. In fact, since $\epsilon<\delta=0.1$ must hold, the assumptions required for RCP are not satisfied for any valid choice of $\epsilon$. Furthermore, even our \textit{in-distribution} samples exceed $\epsilon$, demonstrating
that it is in practice impossible to assume that even "in-distribution" data will always be within the $\epsilon$ bound. Table~\ref{tab:emp_cov_baselines} shows the empirical coverage for both techniques, which does not reach the target $1-\delta=0.9$ in nearly all cases, as expected.
\begin{table}[ht!]
\caption{Empirical coverage for conformal prediction (CP) and robust conformal prediction (RCP) with and without incremental learning (IL).\label{tab:emp_cov_baselines}}
\centering
\begin{minipage}[h]{0.49\linewidth}\centering
    \setlength{\tabcolsep}{2.7pt}
     \begin{tabular}{|c|cc|cc|}
     \hline
     \multicolumn{5}{|c|}{\textbf{Case Study I}}\\
     \hline\hline
     \multirow{2}{*}{\textbf{Setting}} & \multirow{2}{*}{\textbf{CP}} & \textbf{CP} & \multirow{2}{*}{\textbf{RCP}} & \textbf{RCP}\\ 
     & & \textbf{+IL} & & \textbf{+IL}\\
     \hline
     ID & 0.92 & - & 0.41 & - \\
     3 rays & 0.89 & 0.91 & 0.39 & 0.41 \\
     5 rays & 0.88 & 0.91 & 0.37 & 0.41 \\
     noise (0.9) & 0.74 & 0.83 & 0.36 & 0.25 \\
     noise (1.0) & 0.72 & 0.87 & 0.37 & 0.26\\
     \hline
     \end{tabular}
\end{minipage}
\begin{minipage}[h]{0.49\linewidth}\centering
     \setlength{\tabcolsep}{2.8pt}
     \begin{tabular}{|c|cc|cc|}
     \hline
     \multicolumn{5}{|c|}{\textbf{Case Study II}}\\
     \hline\hline
    \multirow{2}{*}{\textbf{Setting}} & \multirow{2}{*}{\textbf{CP}} & \textbf{CP} & \multirow{2}{*}{\textbf{RCP}} & \textbf{RCP}\\ 
     & & \textbf{+IL} & & \textbf{+IL}\\
     \hline
     ID & 0.83 & - & 0.52 & - \\
     2 obs. & 0.72 & 0.85 & 0.43 & 0.50\\
     3 obs. & 0.64 & 0.82 & 0.38 & 0.48 \\
     4 obs. & 0.65 & 0.81 & 0.37 & 0.46 \\
     5 obs. & 0.64 & 0.83 & 0.39 & 0.51 \\
     \hline
 \end{tabular}
\end{minipage}
\end{table}

\section{Repeatability Package}\label{app:repeatability}
Our code\footnote{\href{https://doi.org/10.5281/zenodo.14835448}{https://doi.org/10.5281/zenodo.14835448}} reproduces Tables \ref{tab:recall}-\ref{tab:emp_cov_baselines} and Figures \ref{fig:acp_example}, \ref{fig:both_results}, and \ref{fig:ncs_dist}. Clone the Github repository and see the README for instructions to set up a Docker image\footnote{\href{https://hub.docker.com/r/vwlin/safety_monitoring}{https://hub.docker.com/r/vwlin/safety\_monitoring} (v1.0)} and run the code. Our results were obtained on a machine with Debian 11.0 and 96 CPU cores. To reproduce our results with pre-trained models, each case study requires approximately 10 hours. Our models were trained with CUDA 12.3 and 24 GB of GPU memory.

\end{document}